\newtheorem{theorem}{Theorem}
\theoremstyle{definition}
\DeclareMathOperator*{\argmin}{arg\,min}
\algrenewcommand\algorithmicrequire{\textbf{Input:}}
\algrenewcommand\algorithmicensure{\textbf{Output:}}
\acrodef{BO}[BO]{Bayesian optimization}
\acrodef{HPO}[HPO]{hyperparameter optimization}
\acrodef{TR}[TR]{trust region}
\acrodef{EI}[EI]{expected improvement}
\acrodef{qEI}[qEI]{$q$-expected improvement}
\acrodef{TS}[TS]{Thompson sampling}
\acrodef{GP}[GP]{Gaussian process}
\acrodef{HDBO}[HDBO]{high-dimensional Bayesian optimization}
\acrodef{VAE}[VAE]{variational autoencoder}
\acrodef{ARD}[ARD]{automatic relevance determination}
\acrodef{DOE}[DOE]{design of experiment}
\acrodef{SVM}[SVM]{support vector machine}
\acrodef{SVR}[SVR]{support vector regression}
\acrodef{ARD}[ARD]{automatic relevance determination}
\acrodef{MCTS}[MCTS]{Monte-Carlo tree search}
\acrodef{UCB}[UCB]{upper-confidence bound}
\newcommand{\bodi}{\texttt{BODi}\xspace}
\newcommand{\casmo}{\texttt{Casmopolitan}\xspace}
\newcommand{\bocs}{\texttt{BOCS}\xspace}
\newcommand{\combo}{\texttt{COMBO}\xspace}
\newcommand{\bounce}{\texttt{Bounce}\xspace}
\newcommand{\baxus}{\texttt{BAxUS}\xspace}
\newcommand{\hesbo}{\texttt{HeSBO}\xspace}
\newcommand{\turbo}{\texttt{TuRBO}\xspace}
\newcommand{\alebo}{\texttt{Alebo}\xspace}
\newcommand{\rembo}{\texttt{REMBO}\xspace}
\newcommand{\rducb}{\texttt{RDUCB}\xspace}
\newcommand{\smac}{\texttt{SMAC}\xspace}
\newcommand{\labs}{\texttt{LABS}\xspace}
\newcommand{\satsixty}{\texttt{MaxSAT60}\xspace}
\newcommand{\satonetwofive}{\texttt{ClusterExpansion}\xspace}
\newcommand{\pest}{\texttt{PestControl}\xspace}
\newcommand{\ackley}{\texttt{Ackley53}\xspace}
\newcommand{\svm}{\texttt{SVM}\xspace}
\newcommand{\contamination}{\texttt{Contamination}\xspace}
\renewcommand{\epsilon}{\varepsilon}
\title{Bounce: Reliable High-Dimensional Bayesian\\ Optimization for Combinatorial and Mixed Spaces}
\author{%
    Leonard Papenmeier\\
    Lund University\\
    \texttt{leonard.papenmeier@cs.lth.se} \\
    \And
    Luigi Nardi \\
    Lund University, Stanford University, DBtune \\
    \texttt{luigi.nardi@cs.lth.se} \\
    \And
    Matthias Poloczek\\
    Amazon \\
    San Francisco, CA 94105, USA \\
    \texttt{matpol@amazon.com} \\
}
\begin{document}

    \maketitle

    \begin{abstract}
        Impactful applications such as materials discovery, hardware design, neural architecture search, or portfolio optimization require optimizing high-dimensional black-box functions with mixed and combinatorial input spaces.
        While Bayesian optimization has recently made significant progress in solving such problems, an in-depth analysis reveals that the current state-of-the-art methods are not reliable.
        Their performances degrade substantially when the unknown optima of the function do not have a certain structure.
        To fill the need for a reliable algorithm for combinatorial and mixed spaces, this paper proposes \bounce that relies on a novel map of various variable types into nested embeddings of increasing dimensionality.
        Comprehensive experiments show that \bounce reliably achieves and often even improves upon state-of-the-art performance on a variety of high-dimensional problems.
    \end{abstract}

    \section{Introduction}\label{sec:introduction}
    \acf{BO} has become a `go-to' method for optimizing expensive-to-evaluate black-box functions~\cite{frazier2018tutorial,10.1145/3582078} that have numerous important applications, including hyperparameter optimization for machine learning models~\cite{bergstra2011algorithms,frazier2018tutorial}, portfolio optimization in finance~\cite{baudi2014online},
chemical engineering and materials discovery~\cite{floudas2006advances,lobato2017parallel,herbol2018efficient,hase2018phoenics,schweidtmann2018machine,burger2020mobile,hughes2021tuning,shields2021bayesian,o2005closed,boelrijk2023closed,wang2022bayesian},
hardware design~\cite{nardi2019practical,ejjeh2022hpvm2fpga,hellsten2023baco}, or scheduling problems~\cite{jain1999deterministic}.
These problems are challenging for a variety of reasons.
Most importantly, they may expose hundreds of tunable parameters that allow for granular optimization of the underlying design but also lead to high-dimensional optimization tasks and the `curses of dimensionality'~\citep{powell2019unified,binois2022survey}.
Typical examples are drug design~\cite{negoescu2011the,svensson2022autonomous} and combinatorial testing~\cite{nie2011survey}.
Moreover, real-world applications often have categorical or ordinal tunable parameters, in addition to the real-valued parameters that \ac{BO} has traditionally focused on~\citep{shahriari2015taking,frazier2018tutorial,binois2022survey}.
Recent efforts have thus extended \ac{BO} to combinatorial and mixed spaces.
\casmo of~\citet{wan2021think} uses \acp{TR} to accommodate high dimensionality, building upon prior work of~\citet{eriksson2019scalable} for continuous spaces.
\combo of~\citet{oh2019combinatorial} constructs a surrogate model based on a combinatorial graph representation of the function.
Recently, \citet{deshwal2023bayesian} presented \bodi that employs a novel type of dictionary-based embedding and showed that it outperforms the prior work.
However, the causes for \bodi's excellent performance are not yet well-understood and require a closer examination.
Moreover, the ability of methods for mixed spaces to scale to higher dimensionalities trails behind \ac{BO} for continuous domains.
In particular, \citet{papenmeier2022increasing} showed that nested embeddings allow \ac{BO} to handle a thousand input dimensions, thus outperforming vanilla \ac{TR}-based approaches and raising the question of whether similar performance gains are feasible for combinatorial domains.

In this work, we assess and improve upon the state-of-the-art in combinatorial \ac{BO}.
In particular, we make the following contributions:
\begin{enumerate}[leftmargin=*,noitemsep]
    \item We conduct an in-depth analysis of two state-of-the-art algorithms for combinatorial \ac{BO}, \combo~\citep{oh2019combinatorial} and \bodi~\citep{deshwal2023bayesian}.
    The analysis reveals that their performances often degrade considerably when the optimum of the optimization problem does not exhibit a particular structure common for synthetic test problems.

    \item We propose \bounce (\textbf{B}ayesian \textbf{o}ptimization \textbf{u}sing i{\textbf{n}creasingly high-dimensional \textbf{c}ombinatorial and continuous \textbf{e}mbeddings}), a novel \ac{HDBO} method that effectively optimizes over combinatorial, continuous, and mixed spaces. \bounce leverages parallel function evaluations efficiently and uses nested random embeddings to scale to high-dimensional problems.

    \item We provide a comprehensive evaluation on a representative collection of combinatorial, continuous, and mixed-space benchmarks, demonstrating that \bounce is on par with or outperforms state-of-the-art methods.
\end{enumerate}

    \section{Background and related work}\label{sec:background}
    \paragraph{Bayesian optimization.}
\acl{BO} aims to find the global optimum $\bm{x}^*\in \mathcal{X}$ of a black-box function $f:\mathcal{X}\rightarrow\mathbb{R}$, where $\mathcal{X}$ is the $D$-dimensional search space or \emph{input space}.
Throughout this paper, we consider minimization problems, i.e., we aim to find $\bm{x}^*\in \mathcal{X}$ such that $f(\bm{x}^*)\leq f(\bm{x})$ for all $\bm{x}\in \mathcal{X}$.
The search space $\mathcal{X}$ may contain variables of different types: continuous, categorical, and ordinal.
We denote the number of continuous variables in $\mathcal{X}$ by $n_\textrm{cont}$ and the number of combinatorial variables by $n_\textrm{comb}=n_\textrm{cat}+n_\textrm{ord}=D-n_\textrm{cont}$, where we denote the number of categorical variables by $n_\textrm{cat}$ and the number of ordinal variables by $n_\textrm{ord}$.

\paragraph{Combinatorial domains.}
Extending \ac{BO} to combinatorial spaces is challenging, for example, because the acquisition function is only defined at discrete locations or the dimensionality of the space grows drastically when using one-hot encoding for categorical variables.
Due to its numerous applications, combinatorial \ac{BO} has received increased attention in recent years.
\bocs~\cite{baptista2018bayesian} handles the exponential explosion of combinations by only modeling lower-order interactions of combinatorial variables and imposing a sparse prior on the interaction terms.
\combo~\cite{oh2019combinatorial} models each variable as a graph and uses the graph-Cartesian product to represent the search space.
We revisit \combo's performance on categorical problems in Appendix~\ref{sec:analysis-of-bodi-for-categorical-problems}.
\texttt{CoCaBo}~\cite{ru2020bayesian} combines multi-armed bandits and \ac{BO} to allow for optimization in mixed spaces.
It uses two separate kernels for continuous and combinatorial variables and proposes a weighted average of a product and a sum kernel to model mixed spaces.
\citet{chong2023global} show that certain, possibly combinatorial functions can be modeled by parametric function approximators such as neural networks or random forests.
As a general method to optimize the acquisition function with gradient-based methods, \citet{daulton2022pr} propose probabilistic reparametrization.

\paragraph{High-dimensional continuous spaces.}
Subspace-based methods are primarily used for continuous spaces.
\citet{wang2016bayesian} propose \rembo for \ac{HDBO} in continuous spaces using Gaussian random projection matrices.
\rembo suffers from distortions and projections outside the search domains that the corrections of~\citet{binois2015warped,binois2020choice} address.
The \hesbo algorithm of~\citet{nayebi2019a} avoids the need for corrections by using the \texttt{CountSketch} embedding~\cite{woodruff2014sketching}.
\alebo of~\citet{alebo} builds upon~\rembo, learning suitable corrections of distortions.
\turbo~\cite{eriksson2019scalable} is a method that operates in the full-dimensional input space $\mathcal{X}$, relying on \acf{TR} to focus the search on promising regions of the search space.
\baxus of \citet{papenmeier2022increasing} combines the trust region approach of~\turbo with the random subspace idea of~\hesbo.
\baxus uses a novel family of \emph{nested} random subspaces that exhibit better theoretical guarantees than the \texttt{CountSketch} embedding.
While \baxus handled a $1000D$ problem, it only considers continuous problems and cannot leverage parallel function evaluations.
\texttt{GTBO}~\cite{hellsten2023high} assumes the existence of an axis-aligned active subspace.
The algorithm first identifies ``active'' variables and optimizes in the full-dimensional space by placing separate strong length scale priors onto active and inactive variables.
Another line of recent approaches employs \ac{MCTS} to reduce the complexity of the problem.
\citet{wang2020learning} use \ac{MCTS} to learn a partitioning of the continuous search space to focus the search on promising regions in the search space.
\citet{song2022monte} use a similar approach, but instead of learning promising regions in the search space, they assume an axis-aligned active subspace and use \ac{MCTS} to select important variables.
Linear embeddings and random linear embeddings~\cite{wang2016bayesian,alebo,nayebi2019a,papenmeier2022increasing,bouhlel2018efficient} require little or no training data to construct the embedding but assume a linear subspace.

\paragraph{Combinatorial high-dimensional domains.}
These works optimize black-box functions defined over a combinatorial or mixed space with dozens of input variables.
\citet{thebelt2022tree} use a tree-ensemble kernel to model the \ac{GP} prior and derive a formulation of the \ac{UCB} that allows it to be optimized globally and to incorporate constraints.
\rducb~\cite{ziomek2023random} relies on random additive decompositions of the \ac{GP} kernel to model the correlation between variables.
\casmo~\cite{wan2021think} follows \turbo in using \acp{TR} to focus the search on promising regions of the search space and uses the Hamming distance to model \acp{TR} for combinatorial variables.
For mixed spaces, \casmo uses interleaved search and models continuous and categorical variables with two separate \acp{TR}.
\citet{kim2022combinatorial} use a random projection matrix to approach combinatorial problems in a continuous embedded subspace.
When evaluating a point, their approach projects the continuous candidate point to the high-dimensional search space and then rounds to the next feasible combinatorial solution.
\citet{deshwal2022bayesian} propose two algorithms for \emph{permutation spaces}, which occur in problems such as compiler optimization~\cite{hellsten2023baco} and pose special challenges due to the superexponential explosion of solutions.
\bodi~\cite{deshwal2023bayesian} proposes an embedding type based on a dictionary of anchor points in the search space.
The pairwise Hamming distances between the point and each anchor point $\bm{a}_i$ in the dictionary represent a point in the search space.
The anchor points in the dictionary change at each iteration of the algorithm.
They are sampled from the search space to cover a wide range of `sequencies', i.e., the number of changes from $0$ to $1$ (and vice versa) in the binary vector.
The authors hypothesize that the diverse sampling procedure leads to \bodi's remarkable performance in combinatorial spaces with up to~$60$ dimensions.
To our knowledge, \bodi is the only other method combining embeddings and combinatorial spaces.
We show in Section~\ref{subsec:performance-analysis-of-bodi} that \bodi's reported good performance relies on an artificial structure of the optimizer $\bm{x}^*$ and that its performance degrades considerably when this structure is violated.

    \section{The \bounce algorithm}\label{sec:algorithm}
    \begin{algorithm}[t]
    \caption{The \bounce algorithm}
    \label{alg:thesbo2}
    \begin{algorithmic}[1]
        \Require initial target dimensionality $d_\textrm{init}$, evaluation budget $m$, batch size $B$, evaluation budget to input dimensionality $m_D$, \# new bins added per dimension $b$, number \ac{DOE} points $n_\textrm{init}$
        \Ensure optimizer $\bm{x}^*\in \argmin_{\bm{x}\in\mathcal{D}}f(\bm{x})$.
        \State $i \gets 0$, $d$ $\gets$ $d_\textrm{init}$
        \State $b \gets $ \Call{AdjustBins}{$b$, $d_{\textrm{init}}$, $m_D$} \Comment{ Section~\ref{sec:algorithm}}
        \State $m_i \gets \left\lfloor \frac{b\cdot m_D\cdot d_{\textrm{init}}}{d_{\textrm{init}}\cdot (1-(b+1)^{k+1})} \right\rceil$
        \State $S$ $\gets$ \Call{InitialEmbedding}{$d_0$, $n_\textrm{cont}$, $n_\textrm{cat}$, $n_\textrm{comb}$, $n_\textrm{bin}$} \Comment{ Section~\ref{subsec:embedding}}
        \State $\mathcal{D} \gets \{(\bm{z}_k, f(S^{-1}(\bm{z}_k))) \}_{k \in n_\textrm{init}}$ \Comment{Sample and evaluate initial points.}
        \For{$j = 1, \ldots, m$}
            \State $L_\textrm{cont} \gets 0.8$, $L_\textrm{comb}\gets \min(40, n_\textrm{comb})$
            \While{$L_\textrm{cont} > L_{\min}^\textrm{cont}\, \wedge L_\textrm{comb} > L_{\min}^\textrm{comb}$}
                \State Find $B$ candidates according to Sec.~\ref{subsec:maximizing-the-acquisition-function}
                \State Evaluate $f$ at $B$ and update $\mathcal{D}$: $\mathcal{D} \gets \mathcal{D} \cup \{(\bm{z}_k, f(S^{-1}(\bm{z}_k))) \}_{k \in B}$
                \State Update $L_\textrm{cont}$ and $L_\textrm{comb}$ \Comment{ Section~\ref{sec:algorithm}}
            \EndWhile
            \If{$d<D$}
                \State $i \gets i+1$ \Comment{ Increase index for target dimensionality.}
                \State $S \gets$ \Call{IncreaseEmbedding}{$S$, $b$}\Comment{ Section~\ref{subsec:embedding}}
                \State $d\gets$ \# target variables in $S$
                \State $m_i \gets \left\lfloor \frac{b\cdot m_D\cdot d}{d_{\textrm{init}}\cdot (1-(b+1)^{k+1})} \right\rceil$
            \Else
                \State Reset $\mathcal{D}$ by resampling and evaluating new initial points
                \State Resample $S$, reset $L_\textrm{cont}$ and $L_\textrm{comb}$, $j\gets j + n_\textrm{init}$
            \EndIf
        \EndFor
    \end{algorithmic}
    \label{alg:bounce}
\end{algorithm}

To overcome the aforementioned challenges in \ac{HDBO} for real-world applications, we propose \bounce, a new algorithm for continuous, combinatorial, and mixed spaces.
\bounce uses a \ac{GP}~\cite{williams2006gaussian} surrogate in a lower-dimensional subspace, the \emph{target space}, that is realized by partitioning input variables into `bins', the so-called \emph{target dimensions}.
\bounce only bins variables of the same type (categorical, binary, ordinal, and continuous).
When selecting new points to evaluate, \bounce sets all input variables within the same bin to a single value.
It thus operates in a subspace of lower dimensionality than the input space and, in particular, maximizes the acquisition function in a subspace of lower dimensionality.
\bounce iteratively refines its subspace embedding by splitting bins into smaller bins, allowing for a more granular optimization at the expense of higher dimensionality.
Note that by splitting up bins, \bounce asserts that observations taken in earlier subspaces are contained in the current subspace; see~\citet{papenmeier2022increasing} for details.
Thus, \bounce operates in a series of nested subspaces.
It uses a novel \ac{TR} management to leverage batch parallelism efficiently, improving over the single point acquisition of~\baxus~\citep{papenmeier2022increasing}.

\paragraph{The nested subspaces.}
To model the \ac{GP} in low-dimensional subspaces, \bounce leverages \baxus' family of nested random embeddings~\cite{papenmeier2022increasing}.
In particular, \bounce employs the sparse count-sketch embedding~\cite{woodruff2014sketching} in which each input dimension is assigned to exactly one target dimension.
When increasing the target dimensionality, \bounce creates~$b$ new bins for every existing bin and re-distributes the input dimensions that had previously been assigned to that bin across the now~$b+1$ bins.
\bounce allocates an individual evaluation budget~$m_i$ to the current target space $\mathcal{X}_i$ that is proportional to the dimensionality of~$\mathcal{X}_i$.
When the budget for the current target space is depleted, and \bounce has not found a better solution, \bounce will increase the dimension of the target space until it reaches the input space of dimensionality~$D$.
Let~$d_{\rm init}$ denote the dimensionality of the first target space, i.e., the random embedding that \bounce starts with.
Then \bounce has to increase the target dimension $\left\lceil \log_{b+1}D/d_{\rm init} \right\rceil =:k$-times to reach the input dimensionality~$D$.
After calculating $k$, \bounce re-sets the split factor~$b$ such that the distance between the predicted final target
dimensionality $d_k=d_{\rm init}\cdot{} (b+1)^k$ and the input dimensionality~$D$ is minimized: $b=\lfloor\log_k (D/d_{\rm init})-1\rceil$,  where $\lfloor x \rceil$ denotes the integer closest to~$x$.
This ensures that the predetermined evaluation budget for each subspace will be approximately proportional to its dimensionality.
This contrasts to \baxus~\citep{papenmeier2022increasing} that uses a constant split factor~$b$ and adjusts the initial target dimensionality~$d_{\rm init}$.
The evaluation budget~$m_i$ for the $i$-th subspace~$\mathcal{X}_i$
is $m_i \coloneqq \left\lfloor \frac{b\cdot m_D\cdot d_i}{d_{\rm init}\cdot (1-(b+1)^{k+1})} \right\rceil$, where $m_D$ is the budget until $D$ is reached and $b$ is the maximum number of bins added per split.

\paragraph{Trust region management.}
\bounce follows \turbo~\cite{eriksson2019scalable} and \casmo~\cite{wan2021think} in using \acfp{TR} to efficiently optimize over target spaces of high dimensionality.
\Acp{TR} allow focusing on promising regions of the search space by restricting the next points to evaluate to a region centered at the current best function value~\cite{eriksson2019scalable}.
\ac{TR}-based methods usually expand their \ac{TR} if they find better points and conversely shrink it if they fail to make progress.
If the \ac{TR} falls below the threshold given by the \emph{base length}, \turbo and \casmo restart with a fresh \ac{TR} elsewhere.
\casmo~\cite{wan2021think} uses different base lengths for combinatorial and continuous variables.
For combinatorial variables, the distance to the currently best function value is defined in terms of the Hamming distance, and the base length is an integer.
For continuous variables, \casmo defines the base length in terms of the Euclidean distance, i.e., a real number.
Similarly, \bounce has separate base lengths~$L_{\rm min}^{\rm cont}$ and~$L_{\rm min}^{\rm comb}$ for continuous and combinatorial variables but does not fix the factor by which the \ac{TR} volume is increased or decreased upon successes or failures.
Instead, the factor is adjusted dynamically so that the evaluation budget~$m_i$ for the current target space~$\mathcal{X}_i$ is adhered to.
This design is crucial to enable batch parallelism, as we describe next.

\paragraph{Batch parallelism.}
We allow \bounce to efficiently evaluate batches of points in parallel by using a scalable \ac{TR} management strategy and \ac{qEI}~\cite{wang2020parallel,wilson2017reparameterization,balandat2020botorch} as the acquisition function for batches of size $B>1$.
When \bounce starts with a fresh \ac{TR}, we sample $n_\textrm{init}$ initial points to initialize the \ac{GP}, using a Sobol sequence for continuous variables and uniformly random values for combinatorial variables.

The \ac{TR} management strategy of \bounce differs from previous strategies~\cite{regis2013combining,eriksson2019scalable,wan2021think,papenmeier2022increasing} in that it uses a dynamic factor to determine the \ac{TR} base length.
Recall that \bounce shrinks the \ac{TR} if it fails to make progress and starts a fresh \ac{TR} if the \ac{TR} falls below the threshold given by the base length.
\bounce's rule is based on the idea that the minimum admissible \ac{TR} base length should be reached when the current evaluation budget is exhausted.
If one employed the strategies of \turbo~\cite{eriksson2019scalable}, \casmo~\cite{wan2021think}, or \baxus~\cite{papenmeier2022increasing} for larger batch sizes~$B$ and \bounce's nested subspaces, then one would spend a large part of the evaluation budget in early target spaces.
For example, suppose a continuous problem, the common values for the initial, minimum, and maximum \ac{TR} base length, and the constant shrinkage factor of~\cite{regis2013combining}.
Then, such a method has to shrink the \ac{TR} base length at least seven times (i.e., evaluate~$f$ $7B$-times) before it would increase the dimensionality of the target space.
Thus, the method would risk depleting its budget before reaching a target space suitable for the problem.
On the other hand, we will see that \bounce chooses an evaluation budget that is smaller in low-dimensional target spaces and higher for later target spaces of higher dimensionality.
Considering a~$1000$-dimensional problem with an evaluation budget of~$1000$, it uses only $3, 12$, and $47$ samples for the first three target spaces of dimensionalities~$2$, $8$, and~$32$.

\bounce's strategy permits flexible \ac{TR} shrinkage factors and base lengths, i.e., \ac{TR} base lengths to vary within the range $[L_{\min}, L_{\max}]$.
This allows \bounce to comply with the evaluation budget~$m_i$ for the current target space~$\mathcal{X}_i$.
Suppose that \bounce has evaluated~$j$ batches of~$B$ points each since it last increased the dimensionality of the target space, and let~$L_j$ denote the current \ac{TR} base length.
Observe that hence~$m_i - jB$ evaluations remain for~$\mathcal{X}_i$.
Then \bounce sets the \ac{TR} base length to~$L_{j+1} \coloneqq \lambda_j^{-B} L_j$ with~$\lambda_j < 1$, if it found a new best point whose objective value improves upon the incumbent by at least~$\varepsilon$.
We call this a `success'.
Otherwise, \bounce observes a `failure' and sets~$L_{j+1} \coloneqq \lambda_j^{+B} L_j$.
The rationale of this rule is that if the algorithm is in iteration~$j$ and only observes failures subsequently, then we apply this factor $(m_i-jB)$-times, which is the remaining number of function evaluations in the current subspace~$\mathcal{X}_i$. Hence, the last batch of the $i$-th target space~$\mathcal{X}_i$ will have the minimum \ac{TR} base length, and \bounce will increase the target dimensionality afterward.
If the \ac{TR} is expanded upon a `success', we need to adjust $\lambda_j$ not to use more than the allocated number of function evaluations in a target space.
At each iteration, we therefore set adjustment factor $\lambda_j = (L_{\min}/L_{j})^{1/(m_i-jB)}$.
Note that $\lambda_j$ remains unchanged under this rule unless the \ac{TR} expanded in the previous iteration.

\paragraph{The kernel choice.}
To harvest the sample efficiency of a low-dimensional target space, we would like to combine categorical variables into a single bin, even if they vary in the number of categories.
This is not straightforward.
For example, note that the popular one-hot encoding of categorical variables would give rise to multiple binary input dimensions, which would not be compatible with the above strategy of binning variables to form nested subspaces.
\bounce overcomes these obstacles and allows variables of the same type to share a representation in the target space.
We provide the details in Sect.~\ref{subsec:embedding}.

For the \ac{GP} model, we use the \texttt{CoCaBo} kernel~\cite{ru2020bayesian}.
In particular, we model the continuous and combinatorial variables with two separate $\nicefrac{5}{2}-$Mat\'ern kernels where we use \ac{ARD} for the continuous variables and share one length scale for all combinatorial variables.
Following~\citet{ru2020bayesian}, we use a mixture of the sum and the product kernel:
\[
    k(\bm{x},\bm{x}')=\sigma_f^2(\rho k_\textrm{cmb}(\bm{x}_\textrm{cmb}, \bm{x}'_\textrm{cmb})k_\textrm{cnt}(\bm{x}_\textrm{cnt}, \bm{x}'_\textrm{cnt}) + (1-\rho)(k_\textrm{cmb}(\bm{x}_\textrm{cmb}, \bm{x}'_\textrm{cmb})+k_\textrm{cnt}(\bm{x}_\textrm{cnt}, \bm{x}'_\textrm{cnt}))),
\]
where $\bm{x}_\textrm{cnt}$ and $\bm{x}_\textrm{cmb}$ are the continuous and combinatorial variables in $\bm{x}$, respectively, and $\sigma_f^2$ is the signal variance.
The trade-off parameter $\rho \in [0,1]$ is learned jointly with the other hyperparameters \emph{via} likelihood maximization.
See Appendix~\ref{subsec:kernel_choice} for additional details.

Algorithm~\ref{alg:bounce} gives a high-level overview of \bounce.
In Appendix~\ref{sec:consistency-of-baxuspp}, we prove that \bounce converges to the global optimum under mild assumptions.
We now explain the different components of \bounce in detail.

\subsection{The subspace embedding of mixed spaces}
\label{subsec:embedding}
\bounce supports mixed spaces of four types of input variables: categorical, ordinal, binary, and continuous variables.
We discuss binary and categorical variables separately because we model them differently.
The proposed embedding maps only variables of a single type to each `bin', i.e., no target dimension of the embedding has variables of different types.
Thus, target dimensions are homogeneous in this regard.
Note that the number of target dimensions of each type is implied by the current bin size of the embedding that may grow during the execution.
The proposed embedding can handle categorical or ordinal input variables that differ in the number of discrete values they can take.

\paragraph{Continuous variables.}
As common in \ac{BO}, we suppose that each continuous variable takes values in a bounded interval. 
Thus, we may normalize each interval to~$[-1, 1]$.
The embedding of continuous variables, i.e., input dimensions, follows~\baxus~\cite{papenmeier2022increasing}: each input dimension $D_i$ is associated with a random sign $s_i \in \{-1, +1\}$ and one or multiple input dimensions can be mapped to the same target dimension of the low-dimensional embedded subspace.
Recall that \bounce works on the low-dimensional subspace and thus decides an assignment~$v_j$ for every target dimension~$d_j$ of the embedding.
Then, all input variables mapped to this particular target dimension are set to this value~$v_j$.

\paragraph{Binary variables.}
Binary dimensions are represented by the values $-1$ and $+1$.
Each input dimension~$D_i$ is associated with a random sign $s_i \in \{-1, +1\}$, and the subspace embedding may map one or more binary input dimensions to the same binary target dimension.
While the embedding for binary and continuous dimensions is similar, \bounce handles binary dimensions differently when optimizing the acquisition function.

\begin{figure}[t]
    \centering
    \includegraphics[width=0.65\linewidth]{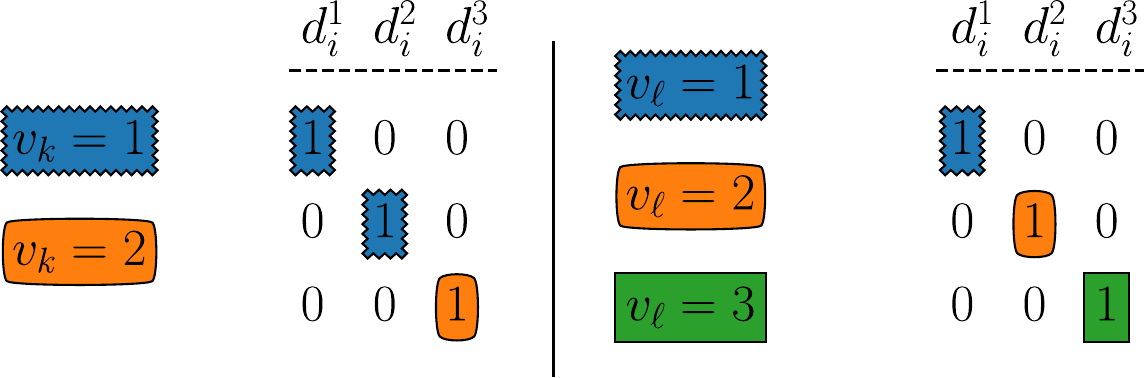}
    \caption{The mapping (or binning) of categorical and ordinal variables. Suppose that variable~$v_k$ has two categories and that~$v_{\ell}$ has three categories. Both are mapped to the target dimension $d_i$ that has cardinality~$3 = \max\{2,3\}$. While the mapping of~$v_{\ell}$ to~$d_i$ is a straightforward bijection, $v_k$ has fewer categories than~$d_i$. Thus, the mapping of~$v_k$ to~$d_i$ repeats label~1. Ordinal variables are mapped similarly.}
    \label{fig:binning_categorical_ordinal}
\end{figure}

\paragraph{Categorical and ordinal variables.}
Categorical variables that differ in the number of categories may be mapped to the same target dimension (bin).
Suppose that the categorical variables $v_1, \ldots, v_\ell$ with cardinalities $c_1,\ldots,c_\ell$ are mapped to a single bin that is associated with the target dimension~$d_j$ of the subspace embedding.
Then~$d_j$ is of categorical type and has~$\max\{c_i \mid 1\leq i \leq \ell\} \eqqcolon c_{\max}$ distinct categories, that is, its cardinality is given by the maximum cardinality of any variable mapped to it.
Thus, the bin~$d_j$ can represent every category of these input variables.

Suppose that \bounce assigns the category~$k \in \{1,\ldots,c_{\max}\}$ to the categorical bin (target dimension)~$d_j$.
We transform this label to a categorical assignment to each input variable $v_1, \ldots, v_\ell$, setting $v_i = \left \lceil k\cdot (c_i/c_{\max}) \right \rceil$.
Recall that \bounce may split up bins, i.e., target dimensions, to increase the dimensionality of its subspace embedding.
In such an event, every derived bin inherits the cardinality of the parent bin. This allows us to retain any observations the algorithm has taken up to this point.
Analogously to the random sign for binary variables, we randomly shuffle the categories before the embedding.
This reduces the risk of \bounce being biased towards a specific structure of the optimizer (see Appendix~\ref{subsec:low-sequency-version-of-baxuspp}).

We treat ordinal variables as categorical variables whose categories correspond to the discrete values the ordinal variable can take.
For the sake of simplicity, we suppose here that an ordinal variable~$v_i$ has range~$\{1,2,\ldots,c_i\}$ and $c_i \geq 2$ for all $i\in \{1,\ldots,\ell\}$.
Figure~\ref{fig:binning_categorical_ordinal} shows examples of the binning of categorical and ordinal variables.

\subsection{Maximization of the acquisition function}
\label{subsec:maximizing-the-acquisition-function}

We use \ac{EI}~\cite{jones1998efficient} for batches of size $B=1$ and \ac{qEI}~\cite{wang2020parallel,wilson2017reparameterization,balandat2020botorch} for larger batches.
We optimize the \ac{EI} using gradient-based methods for continuous problems and local search for combinatorial problems.
We interleave gradient-based optimization and local search for functions defined over a mixed space; see Appendix~\ref{subsec:optimization-of-the-acquisition-function} for details.

    \section{Experimental evaluation}\label{sec:experiments}
    We evaluate \bounce empirically on various benchmarks whose inputs are combinatorial, continuous, or mixed spaces.
The evaluation comprises the state-of-the-art algorithms \bodi~\cite{deshwal2023bayesian}, \casmo~\cite{wan2021think}, \combo~\cite{oh2019combinatorial}, \smac~\cite{hutter2011sequential}, and \rducb~\cite{ziomek2023random}, using code provided by the authors.
We also report \texttt{Random\ Search}~\cite{bergstra2012random} as a baseline.
For categorical problems, \combo's implementation suffers from a bug explained in Appendix~\ref{subsec:combo-on-categorical-problems}.
We report the results for \combo with the correct benchmark implementation as ``\combo (fixed)''.

\paragraph{The experimental setup.}
We initialize every algorithm with five initial points.
The plots show the performances of all algorithms averaged over 50 repetitions except \bodi, which has 20 repetitions due to resource constraints caused by its high memory demand.
The shaded regions give the standard error of the mean.
We use common random seeds for all algorithms and for randomizing the benchmark functions.
We run all methods for 200 function evaluations unless stated otherwise. The \texttt{Labs}~(Section~\ref{subsec:50d-low-autocorrelation-binary-sequences-(labs)}) and \texttt{MaxSat125}~(Section~\ref{subsec:industrial-maximum-satisfiability:-125d-satonetwofive-benchmark}) benchmarks are run for 500 evaluations.

\paragraph{The benchmarks.}
The evaluation uses seven established benchmarks~\cite{deshwal2023bayesian}: $53$D \svm, $50$D \labs, $125$D \satonetwofive~\cite{abrame2014ahmaxsat,bacchus2019maxsat}, $60$D \satsixty~\citep{oh2019combinatorial,deshwal2023bayesian}, $25$D \pest, $53$D \ackley, and $25$D \contamination~\cite{hu2010contamination,baptista2018bayesian,oh2019combinatorial}.
Due to space constraints, we moved the results for the \satsixty, \contamination, and \ackley benchmarks to Appendix~\ref{subsec:bounce-and-baselines-on-additional-benchmarks}.
For each benchmark, we report results for the originally published formulation and for a modification where we move the optimal point to a random location.
The randomization procedure is fixed for each benchmark for all algorithms and repetitions.
For binary problems, we flip each input variable independently with probability~$0.5$.
For categorical problems, we randomly permute the order of the categories.
We motivate this randomization in Section~\ref{subsec:performance-analysis-of-bodi}.

\subsection{50D Low-Autocorrelation Binary Sequences (\labs)}\label{subsec:50d-low-autocorrelation-binary-sequences-(labs)}
\begin{figure}
    \centering
    \includegraphics[width=.65\linewidth]{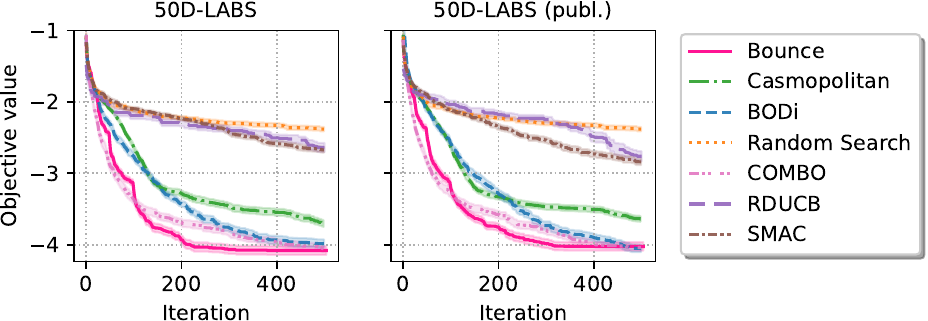}
    \caption{The 50D low-autocorrelation binary sequence problem. \bounce finds the best solutions, followed by~\combo.}
    \label{fig:results_labs}
\end{figure}
\labs has $n=50$ binary dimensions.
It has important applications in communications engineering and mathematics; see \cite{packebusch2016low} for details.
\labs is a hard combinatorial problem and currently solved \emph{via} exhaustive search.
The goal is to find a sequence $\bm{x}\in \{-1, +1\}^n$ with a maximum merit factor $F(\bm{x})=\frac{n^2}{2E(\bm{x})}$, where $E(\bm{x})=\sum_{k=1}^{n-1}C_k^2(\bm{x})$ and $C_k(\bm{x}) = \sum_{i=1}^{n-k} x_i x_{i+k}$ for $k=0,\ldots, n-1$ are the autocorrelations of $\bm{x}$~\cite{packebusch2016low}.
Figure~\ref{fig:results_labs} summarizes the performances. We observe that \bounce outperforms all other algorithms on the benchmark's original and randomized versions.

\subsection{Industrial Maximum Satisfiability: 125D \satonetwofive benchmark}\label{subsec:industrial-maximum-satisfiability:-125d-satonetwofive-benchmark}
MaxSat is a notoriously hard problem for which various approximations and exact (exponential time) algorithms have been developed; see~\cite{hansen1990algorithms,pw17} for an overview.
We evaluate \bounce and the other algorithms on the 125-dimensional \satonetwofive benchmark, a real-world MaxSAT instance with many applications in materials science~\cite{bacchus2018maxsat}.
Unlike the \satsixty benchmark (see Appendix~\ref{subsec:maximum-satisfiability}), \satonetwofive is not a crafted benchmark, and its optimum has no synthetic structure~\cite{abrame2014ahmaxsat,maxsatwebsite}.
We treat the MaxSat problems as black-box problems; hence, algorithms do not have access to the clauses, and we cannot use the usual algorithms.

\begin{figure}
    \centering
    \includegraphics[width=.7\linewidth]{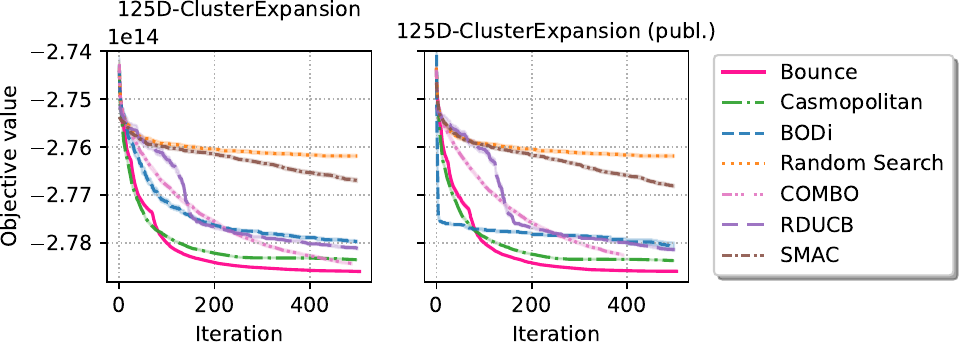}
    \caption{The 125D weighted \satonetwofive maximum satisfiability problem. We plot the total negative weight of clauses. \bounce produces the best assignments.}
    \label{fig:results_maxsat125}
\end{figure}

Figure~\ref{fig:results_maxsat125} shows the total negative weight of the satisfied clauses as a function of evaluations.
We cannot plot regret curves since the optimum is unknown~\cite{bacchus2019maxsat}.
We observe that \bounce finds better solutions than all other algorithms.
\bodi is the only algorithm for which we observe sensitivity to the location of the optimal assignment: for the published version of the benchmark, \bodi quickly jumps to a moderately good solution but fails to make further progress.

\subsection{25D Categorical Pest Control}\label{subsec:pest-results}
\begin{figure}
    \centering
    \includegraphics[width=.65\linewidth]{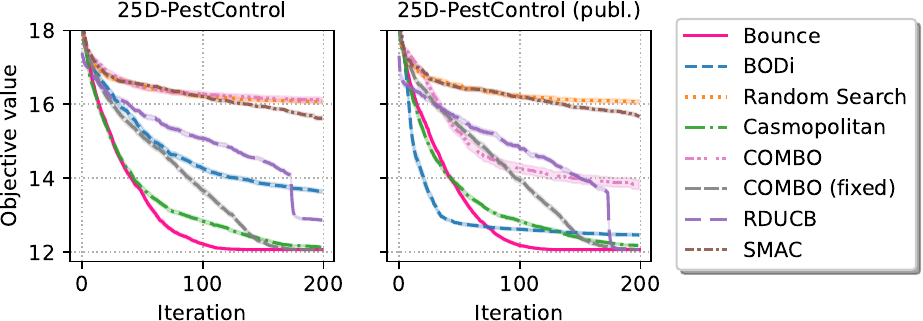}
    \caption{The 25D categorical pest control problem. \bounce obtains the best solutions, followed by \casmo. \bodi's performance degrades significantly when shuffling the order of categories.}
    \label{fig:results_pest}
\end{figure}
\pest is a more complex version of the \contamination benchmark and has 25 categorical variables with five categories each~\citep{oh2019combinatorial}.
The task is to select one out of five actions $\{1,2,\ldots,5\}$ at each of~$25$ stations to minimize the objective function that combines total cost and a measure of the spread of the pest.
We note the setting $\bm{x}=(5,5,\ldots,5)$ achieves a good value of~$12.57$, while the best value found in our evaluation is $12.07$ is $\bm{x}=(5,5,\ldots,5,1)$ and thus has a Hamming distance of one.
The random seed used in our experiments is zero.
Figure~\ref{fig:results_pest} summarizes the performances of the algorithms.
\bounce is robust to the location of the global optimum and consistently obtains the best solutions.
In particular, the performances of~\combo and \bodi depend on whether the optimum has a certain structure.
We discuss this issue in detail in Appendix~\ref{sec:analysis-of-bodi-for-categorical-problems}.

\subsection{\svm{} -- a 53D AutoML task}
\begin{figure}
    \centering
    \includegraphics[width=.7\linewidth]{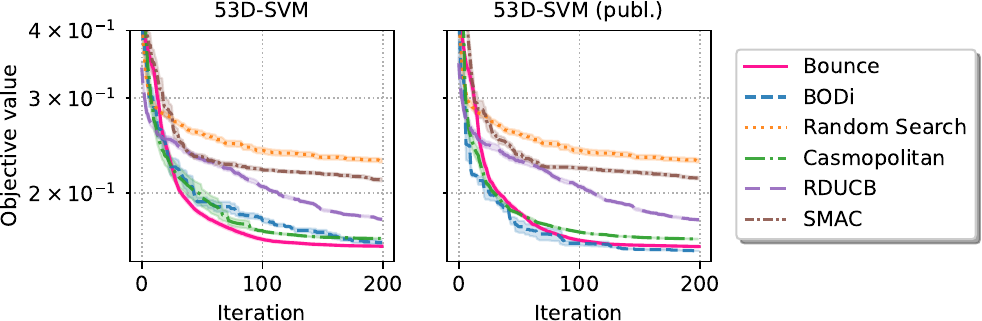}
    \caption{The 53-dimensional \svm benchmark. \bounce, \bodi, and \casmo achieve comparable solutions.}
    \label{fig:results_svm}
\end{figure}
In the \svm benchmark, we optimize over a mixed space with 50 binary and 3 continuous parameters to tune an $\varepsilon$-\ac{SVR} model~\cite{smola2004tutorial}.
The 50 binary parameters determine whether to include or exclude an input feature from the dataset.
The 3 continuous parameters correspond to the regularization parameter $C$, the kernel width $\gamma$, and the $\varepsilon$ parameter of the $\varepsilon$-\ac{SVR} model~\cite{smola2004tutorial}.
Its root mean squared error on a held-out dataset gives the function value.
Figure~\ref{fig:results_svm} summarizes the performances of the algorithms.
We observe that \bounce, \bodi, and \casmo achieve comparable solutions. \bodi performs slightly worse if the ordering of the categories is shuffled and slightly better if the optimal assignment to all binary variables is one.
\combo does not support continuous variables and thus was omitted.

\subsection{\bounce's efficacy for batch acquisition}\label{subsec:batching}
We study the sample efficiency of \bounce when it selects a batch of~$B$ points in each iteration to evaluate in parallel.
Figure~\ref{fig:batches} shows the results for~$B=1,3,5,10$, and $20$, where \bounce was run for $\min(2000,200\cdot{}B)$ function evaluations.
We configure \bounce to reach the input dimensionality after $100$ evaluations for $B=1,3,5$ and after $25B$ for $B=10,20$.
We observe that \bounce leverages parallel function evaluations effectively: it obtains a comparable function value at a considerably smaller number of iterations, thus saving wall-clock time for applications with time-consuming function evaluations.
We also studied batch acquisition for continuous problems and found that \bounce also provides significant speed-up.
Due to space constraints, we deferred the discussion to Appendix~\ref{subsec:batches-continuous}.

\begin{figure}[t]
    \centering
    \includegraphics[width=.8\linewidth]{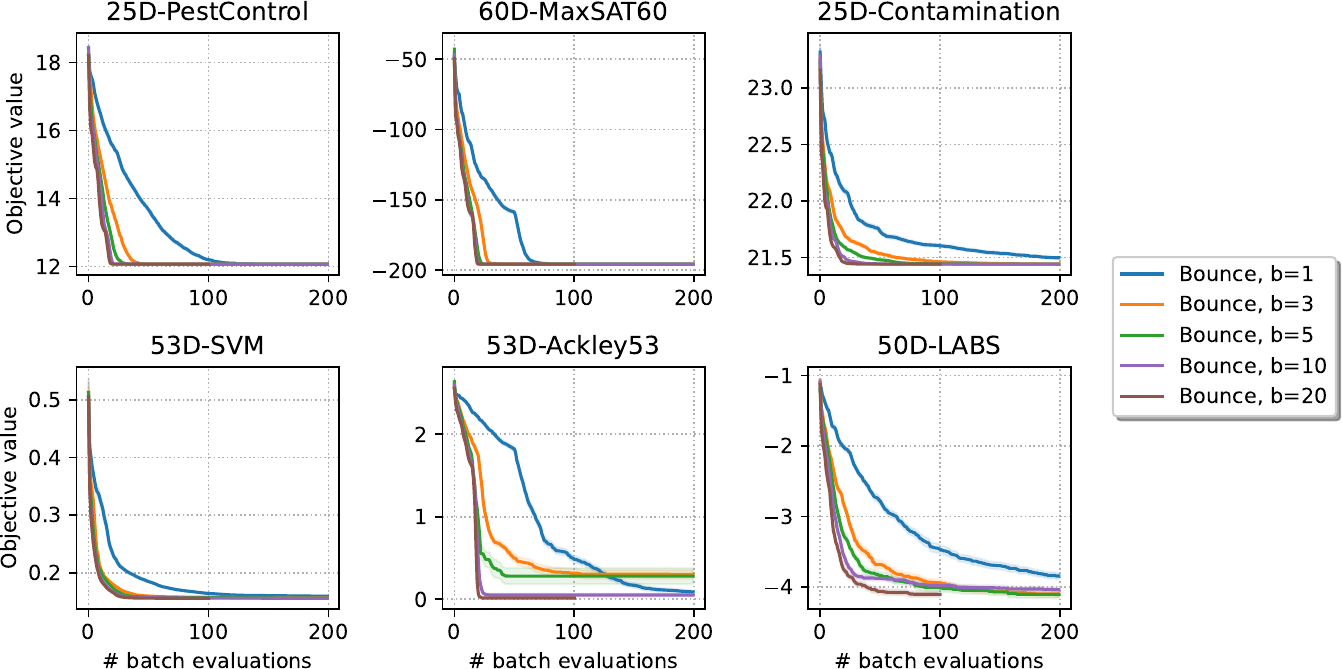}
    \caption{\bounce benefits from the batch acquisition that allows parallelizing function evaluations. We show the best function values obtained after each batch for batch sizes $1,3,5,10$, and 20.}
    \label{fig:batches}
\end{figure}

\subsection{The sensitivity of \bodi and \combo to the location of the optima}
\label{subsec:performance-analysis-of-bodi}
The empirical evaluation reveals that the performances of \bodi~\cite{deshwal2023bayesian} and \combo~\cite{oh2019combinatorial} are sensitive to the location of the optima.
Both methods degrade on at least one benchmark when the optimum is moved to a randomly chosen point.
This is particularly unexpected for categorical variables where moving the optimum to a random location is equivalent to shuffling the labels of the categories of each variable. Such a change of representation should not affect the performance of an algorithm.

\bodi is more susceptible to the location of the optimizer than \combo.
The performance of \combo degrades only on the categorical \pest benchmark, whereas \bodi degrades on five out of seven benchmarks.
Looking closer, we observe that \bodi's performance degradation is particularly large for synthetic benchmarks like \ackley and \satsixty, where setting all variables to the same value is optimal.
Figure~\ref{fig:bodi_breaks} summarizes the effects of moving the optimum on~\bodi.
Due to space constraints, we moved the details and a discussion of categorical variables to the appendix.
Similarly, setting all binary variables of the \svm benchmark to one produces a good objective value.
It is not surprising, given that the all-one assignment corresponds to including all features previously selected for the benchmark because of their high importance.

We show in Appendix~\ref{sec:additional-analysis-for-bodi-on-binary-problems} that \bodi adds a point to its dictionary with zero Hamming distance to an all-zero or all-one solution, with a probability that increases with the dictionary size.
\citet[p.~7]{deshwal2023bayesian} reported that \bodi's performance `tends to improve' with the size of the dictionary.
Moreover, \bodi samples a new dictionary in each iteration, eventually increasing the chance of having such a point in its dictionary.
Thus, we hypothesize that \bodi benefits from having a near-optimal solution in its dictionary, likely for all-zero or all-one solutions.
For \combo, Figure~\ref{fig:results_pest} shows that the performance on \pest degrades substantially if the labels of the categories are shuffled.
Then \combo's sample-efficiency becomes comparable to \texttt{Random Search}.
\begin{figure}[t]
    \centering
    \includegraphics[width=.8\linewidth]{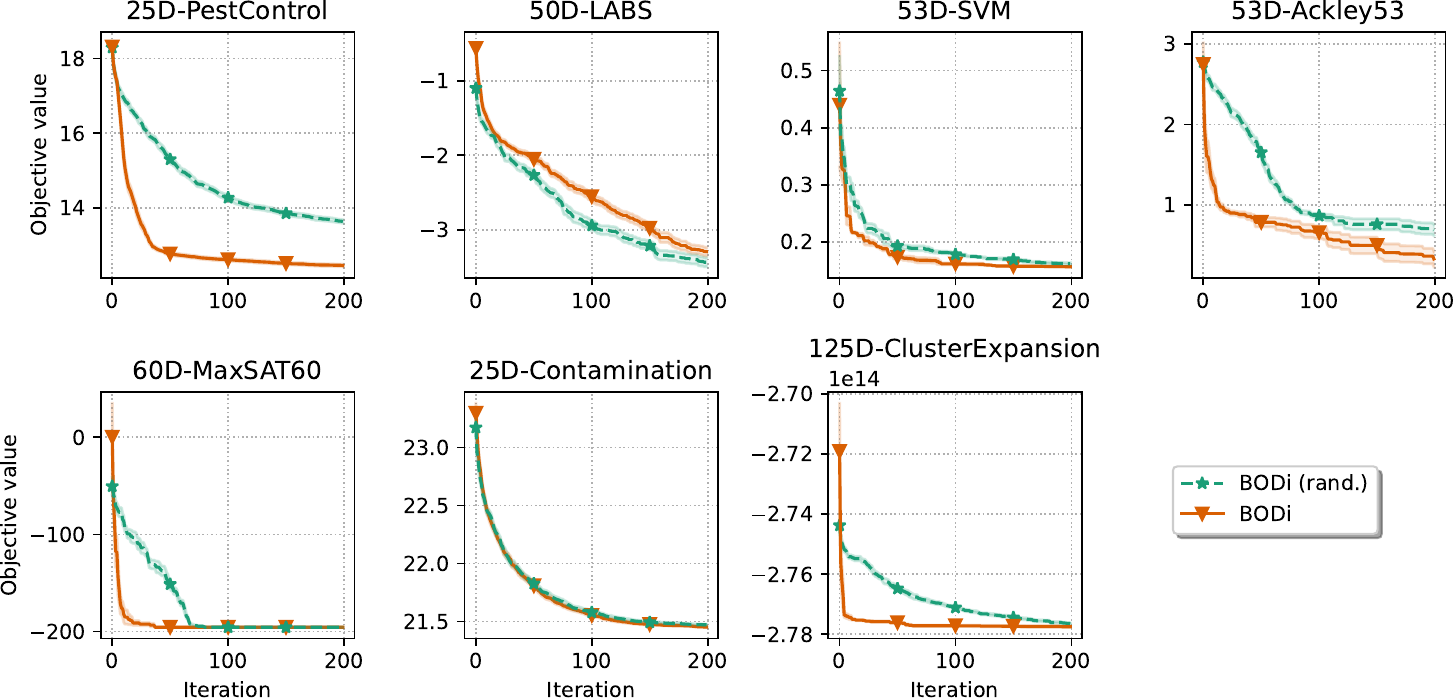}
    \caption{\bodi's performance degrades on five out of seven benchmarks when randomizing the location of the optimal solution: \bodi on the default version (orange) of the benchmark and on the modified version (green, dashed) where the optimum was moved.}
    \label{fig:bodi_breaks}
\end{figure}

    \section{Discussion}\label{sec:conclusion}
    \ac{BO} in combinatorial spaces has many exciting and impactful applications.
Its applicability to real-world problems, such as \labs that defy a closed-form solution, makes it a valuable tool for practitioners.
Our empirical evaluation reveals that state-of-the-art methods fail to provide good solutions reliably.
In particular, it finds that \bodi and \combo, which performed best in recent publications, are sensitive to the location of the optimizer.
We identified design flaws in \bodi and an implementation bug in \combo as the root causes of the performance degradations.

The proposed \bounce algorithm is reliable for high-dimensional black-box optimization in combinatorial, continuous, and mixed spaces.
The empirical evaluation demonstrates that \bounce reliably outperforms the state-of-the-art on a diverse set of problems.
Using a novel \ac{TR} management strategy, \bounce leverages parallel evaluations of the objective function to improve its performance.
We anticipate headroom by tailoring the modeling of combinatorial objects, e.g., arising in the search for peptides or materials discovery~\citep{UENO201618,Frazier2016,packwood2017bayesian,souza2019deepfreak,herbol2018efficient,hase2021gryffin}.
Here it seems particularly interesting to incorporate prior belief on the importance of decision variables while maintaining the overall scalability.
Moreover, extending the present work to black-box constraints~\citep{ghoreishi2019multi,eriksson2021scalable}, multiple objectives, and multiple information sources~\citep{poloczek2017multi,ghoreishi2018fusion,cosenza2022multi} will considerably expand the applicable use cases.
\paragraph{Limitations.}

\bounce is not designed to handle noisy evaluations of the objective function.
While it seems straightforward to extend \bounce to handle noisy evaluations, e.g., by using a Gaussian process with a noise term and acquisition functions that account for noise~\citep{shahriari2015taking}, we leave this for future work.
Moreover, in applications where the categorical or ordinal variables vary substantially in the number of values they can take, there may be better ways to 'bin' them.

\paragraph{Societal impact.}
Bayesian optimization has recently gained wide-spread popularity for tasks in drug discovery~\cite{negoescu2011the}, chemical engineering~\cite{lobato2017parallel,shields2021bayesian,hase2018phoenics,schweidtmann2018machine,burger2020mobile}, materials discovery~\cite{UENO201618,Frazier2016,packwood2017bayesian,souza2019deepfreak,herbol2018efficient,hase2021gryffin}, aerospace engineering~\cite{lukaczyk2014active,baptista2018bayesian,lam2018advances}, robotics~\cite{lizotte2007automatic,calandra2016bayesian,rai2018bayesian,tuprints5878,mayr2022skill}, and many more.
This highlights the Bayesian optimization community's progress toward providing a reliable `off-the-shelf optimizer.'
However, this promise is not yet fulfilled for the newer domain of mixed-variable Bayesian optimization that allows optimization over hundreds of `tunable levers', some of which are discrete, while others are continuous.
This domain is of particular relevance for the tasks above.
\bounce's ability to incorporate more such levers in the optimization significantly impacts the above practical applications, allowing for more granular control of a chemical reaction or a processing path, to give some examples.
The empirical evaluation shows that the performance of state-of-the-art methods is highly sensitive to the location of the unknown global optima and often degenerates drastically, thus putting practitioners at risk.
The proposed algorithm \bounce, however, achieves robust performance over a broad collection of tasks and thus will become a `goto' optimizer for practitioners in other fields.
Therefore, we open-source the \bounce code.\footnote{\url{https://github.com/LeoIV/bounce}}

    \begin{ack}
        Leonard Papenmeier and Luigi Nardi were partially supported by the Wallenberg AI, Autonomous Systems and Software Program (WASP) funded by the Knut and Alice Wallenberg Foundation.
        Luigi Nardi was partially supported by the Wallenberg Launch Pad (WALP) grant Dnr 2021.0348 and by affiliate members and other supporters of the Stanford DAWN project — Ant Financial, Facebook, Google, Intel, Microsoft, NEC, SAP, Teradata, and VMware.
        The computations were enabled by resources provided by the National Academic Infrastructure for Supercomputing in Sweden (NAISS) at the Chalmers Centre for Computational Science and Engineering (C3SE) and National Supercomputer Centre at Linköping University, partially funded by the Swedish Research Council through grant agreement no. 2022-06725.
    \end{ack}

    \vfill

    \pagebreak

    \bibliographystyle{plainnat}
    \bibliography{references}

    \appendix
    \newpage

    \section{Consistency of \bounce}\label{sec:consistency-of-baxuspp}
    In this section, we prove the consistency of the \bounce algorithm.
The proof is based on~\citet{papenmeier2022increasing} and~\citet{eriksson2021scalable}.

\begin{theorem}[\bounce consistency]
    With the following definitions
    \renewcommand{\theenumi}{Def.~\arabic{enumi}}
    \begin{enumerate}[leftmargin=*,noitemsep]
        \item $(\bm{x}_k)_{k=1}^\infty$ is a sequence of points of decreasing function values;\label{def:decreasing}
        \item $\bm{x}^*\in \argmin_{\bm{x}\in\mathcal{X}}$ is a minimizer of $f$ in $\mathcal{X}$;
    \end{enumerate}
    and under the following assumptions:
    \renewcommand{\theenumi}{Ass.~\arabic{enumi}}
    \begin{enumerate}[leftmargin=*,noitemsep]
        \item $D$ is finite;
        \item $f$ is observed without noise;
        \item The range of~$f$ is bounded in $\mathcal{X}$, i.e., $\exists C \in \mathbb{R}_{++}$ s.t. $|f(\bm{x})|<C\,\ \forall \bm{x}\in\mathcal{X}$;
        \item For at least one of the minimizers $\bm{x}_i^*$ the (partial) assignment corresponding to the continuous variables lies in a (continuous) region with positive measure;
        \item One \bounce reached the input dimensionality $D$, the continuous elements of the initial points $\{\bm{x}_{\textrm{cont}_i}\}_{n=1}^{n_\textrm{init}}$ after each \ac{TR} restart are chosen
        \begin{enumerate}[leftmargin=*,noitemsep]
            \item uniformly at random for continuous variables; and
            \item such that every realization of the combinatorial variables has positive probability;
        \end{enumerate}
    \end{enumerate}
    then the \bounce algorithm finds a global optimum with probability 1, as the number of samples~$N$ goes to~$\infty$.
\end{theorem}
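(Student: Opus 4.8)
The plan is to show that, almost surely, \bounce eventually evaluates a point whose objective value equals $f(\bm{x}^*)$; since by Def.~\ref{def:decreasing} the incumbents $(\bm{x}_k)$ have non-increasing objective value, such a point is then retained forever and the theorem follows. I would split the argument into three parts: (i) \bounce reaches the full input dimensionality $D$ after finitely many evaluations; (ii) once it operates at dimensionality $D$, \bounce performs infinitely many \ac{TR} restarts, each consuming only a bounded number of evaluations; and (iii) at every such restart the freshly sampled initial design contains a global optimum with probability at least some constant $p>0$ that does not depend on the restart. Given (i)--(iii), the events ``the initial design of the $t$-th restart at dimensionality $D$ contains a global optimum'' depend only on that restart's fresh random draws, hence are independent across $t$ and each has probability $\ge p$; by the second Borel--Cantelli lemma infinitely many of them occur almost surely, and in particular at least one does, which together with (i), (ii), and the monotonicity of the incumbents proves the claim. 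This mirrors and adapts the consistency arguments of \citet{papenmeier2022increasing} and \citet{eriksson2021scalable}.

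For (i): since $D$ is finite (Ass.~1), the number of dimensionality increases from $d_{\rm init}$ to $D$ is $k=\lceil\log_{b+1}D/d_{\rm init}\rceil<\infty$, and each intermediate target space $\mathcal{X}_i$ carries a finite, fixed budget $m_i$; hence at most $\sum_{i<k}(n_{\rm init}+m_i)<\infty$ evaluations precede the event $d=D$. For (ii): once $d=D$ the algorithm's loop always takes the \emph{else} branch, which resamples the embedding $S$, resets $\mathcal{D}$, and resets both base lengths, so it suffices to bound the number of evaluations between two consecutive resets. The budget $m_i$ of the target space at $D$ is finite and fixed. Within a single epoch the within-epoch incumbent can improve by at least $\varepsilon$ only $O(C/\varepsilon)$ times because $|f|<C$ on $\mathcal{X}$ (Ass.~3), so there are only finitely many ``successes''; outside successes the base length is multiplied by $\lambda_j^{+B}<1$ at each batch, and the dynamic rule $\lambda_j=(L_{\min}/L_j)^{1/(m_i-jB)}$ is calibrated precisely so that the base length attains $L_{\min}$ by the time the budget $m_i$ is exhausted, even after the permitted expansions. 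When either $L_\textrm{cont}$ or $L_\textrm{comb}$ drops below its threshold \bounce restarts, so each epoch at dimensionality $D$ ends within $n_{\rm init}+m_i$ evaluations, and therefore the number of restarts tends to infinity as the evaluation budget grows without bound.

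For (iii): by Ass.~4 there is a global minimizer whose combinatorial part $\bm{c}^*$ admits a positive-measure set $A\subseteq[-1,1]^{n_\textrm{cont}}$ of continuous completions that, paired with $\bm{c}^*$, are also global minimizers. At $d=D$ the subspace embedding is a bijection between the target space and the input space (one input dimension per bin), and the random signs, category shuffles and the resampling of $S$ only relabel coordinates bijectively; hence, by Ass.~5, each initial point of a restart independently has its combinatorial part equal to $\bm{c}^*$ with some probability $q>0$ and, independently of that, its continuous part in $A$ with probability $|A|/2^{n_\textrm{cont}}>0$. Thus one initial point is a global optimum with probability at least $p':=q\,|A|/2^{n_\textrm{cont}}>0$, and the $n_{\rm init}$-point design contains one with probability at least $p:=1-(1-p')^{n_{\rm init}}>0$, uniformly over restarts. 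Combining with (i), (ii), and the Borel--Cantelli argument above finishes the proof.

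The step I expect to be the main obstacle is (ii): rigorously establishing that the trust region restarts infinitely often. This is not a sampling issue but a statement about the \ac{TR} \emph{dynamics} --- one must verify that the dynamic shrink/expand factor $\lambda_j$, together with the boundedness of $f$ (which caps the number of budget-extending ``successes'' per epoch), forces the base length below $L_{\min}^{\textrm{cont}}\wedge L_{\min}^{\textrm{comb}}$ within the fixed budget $m_i$, so that every epoch terminates after a bounded number of evaluations. The remaining ingredients --- finiteness of $k$, the measure-theoretic lower bound on hitting the optimal set, and the Borel--Cantelli conclusion --- are routine.
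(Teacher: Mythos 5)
Your proposal is correct and follows essentially the same route as the paper's proof: boundedness of $f$ together with the fixed success threshold yields only finitely many successes, so the trust regions shrink to their minimum base lengths, the target dimension grows to $D$ after finitely many splits, restarts then recur indefinitely, and the uniformly random initial designs drawn at each restart hit the positive-measure set of optimizers (Ass.~4 and Ass.~5) almost surely. The only cosmetic difference is the final probabilistic step—you use a uniform lower bound plus Borel--Cantelli, whereas the paper invokes the Solis--Wets optimality-region argument with $\epsilon\to 0$—and your parts (ii) and (iii) are, if anything, spelled out more explicitly than in the paper's own write-up.
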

\begin{proof}
    The range of~$f$ is bounded per Assumption~$3$, and \bounce only considers a function evaluation a `success' if the improvement over the current best solution exceeds a certain constant threshold.
    \bounce can only have a finite number of `successful' evaluations because the range of~$f$ is bounded per Assumption~$3$.
    For the sake of a contradiction, we suppose that \bounce does not obtain an optimal solution as its number of function evaluations~$N \to \infty$.
    Thus, there must be a sequence of failures, such that the \acp{TR} in the current target space, i.e., the current subspace, will eventually reach its minimum base length.
    Recall that in such an event, \bounce increases the target dimension by splitting up the `bins', thus creating a subspace of~$(b+1)$-times higher dimensionality.
    Then \bounce creates a new \ac{TR} that again experiences a sequence of failures that lead to another split, and so on.
    This series of events repeats 
    until the embedded subspace eventually equals the input space and thus has dimensionality~$D$.
    See lines~$12-16$ in Algorithm~\ref{alg:bounce} in Sect.~\ref{sec:algorithm}.

    Still supposing that \bounce does not find an optimum in the input space, there must be a sequence of failures such that the side length of the \ac{TR} again falls below the set minimum base length, now forcing a restart of~\bounce.
    Recall that at every restart, \bounce samples a fresh set of initial points uniformly at random from the input space; see line~$18$ in Algorithm~\ref{alg:bounce}.
    Therefore, with probability 1, a random sample will eventually be drawn from any subset $\mathcal{Y}\subseteq \mathcal{X}$ with positive Lebesgue measure ($\nu(\mathcal{Y})>0$):
    \begin{align}    
        1-\lim_{k \to \infty}\left (1-\mu(\mathcal{Y}) \right )^k = 1\label{eq:sample_everything},
    \end{align}
    where $\mu$ is the uniform probability measure of the sampling distribution that \bounce employs for initial data points upon  restart~\cite{solis1981minimization}.
    
     Let
     \begin{equation}
        \alpha=\inf\left\{t: \nu\left[x\in \mathcal{X} \;\mid{}\; f(x)<t\right] > 0\right\}
     \end{equation}
     denote the essential infimum of $f$ on $\mathcal{X}$ with $\nu$ being the Lebesgue measure~\cite{solis1981minimization}.
     
     Following~\citet{solis1981minimization}, we define the optimality region, i.e., the set of points whose function value is larger by at most~$\epsilon$ than the essential infimum:
     \begin{align}
     R_{\epsilon, M}= \{x\in \mathcal{X}\;|\;f(x)<\alpha+\epsilon\}
     \end{align}
     with $\epsilon>0$ and $M<0$.
     Because of Ass.~4, at least one optimal point lies in a region of positive measure that is continuous for the continuous variables.
     Therefore, we have that $\alpha=f(\bm{x}^\ast)$.
     Note that this is also the case if the domain of $f$ only consists of combinatorial variables~(Ass.~5).
     Then,
     $R_{\epsilon, M}=\{\bm{x}\in\mathcal{X} \;|\; f(\bm{x})<f(\bm{x}^\ast)+\epsilon\}$.
    
    Let $(\bm{x}^\star_k)_{k=1}^{\infty}$ denote the sequence of best points that \bounce discovers with $\bm{x}^\star_k$ being the best point up to iteration $k$.
    This sequence satisfies \ref{def:decreasing} by construction.
    Note that $\bm{x}^\star_k \in R_{\epsilon, M}$ implies that $\bm{x}^\star_{k'} \in R_{\epsilon, M}$ for all $k'\geq k+1$~\cite{solis1981minimization} because observations are noise-free.
    Then, 
    \begin{align}
        \mathbb{P}\left [\bm{x}^\star_k \in R_{\epsilon, M} \right ] &= 1-\mathbb{P}\left [ \bm{x}^\star_k\in \mathcal{X}\setminus R_{\epsilon,M} \right ]\\
        &\geq 1-\left (1-\mu(R_{\epsilon,M}) \right )^k,
\intertext{and,}
        1 \geq \lim_{k \to \infty} \mathbb{P}\left [\bm{x}^\star_k\in R_{\epsilon,M} \right ] &\geq \underbrace{1-\lim_{k \to \infty}\left (1-\mu(R_{\epsilon,M}) \right )^k}_{=1\text{, Eq.~\eqref{eq:sample_everything}}} = 1,
    \end{align}
    i.e., $\bm{x}^\star_k$ eventually falls into the optimality region~\cite{solis1981minimization}.
    By letting $\epsilon \to 0$, $\bm{x}^\star_k$ converges to the global optimum with probability $1$ as $k\to \infty$.
    
\end{proof}

    \section{Additional experiments}\label{sec:additional-experiments}
    We compare \bounce to the other algorithms on three additional benchmark problems: \ackley and \satsixty~\cite{deshwal2023bayesian}.
Moreover, we run two additional studies to investigate the performance of \bounce further.
First, we run \bounce on a set of continuous problems from \citet{papenmeier2022increasing} to showcase the performance and scalability of \bounce on purely continuous problems.
We then present a ``low-sequency'' version of \bounce to showcase how such a version can outperform its competitors on the original benchmarks by introducing a bias towards low-sequency solutions.

\subsection{\bounce and other algorithms on additional benchmarks}\label{subsec:bounce-and-baselines-on-additional-benchmarks}

\subsubsection{The synthetic \ackley benchmark function}

\ackley is a 53-dimensional function with 50 binary and three continuous variables.
\citet{wan2021think} discretized 50 continuous variables of the original Ackley function, requiring these variables to be either zero or one.
This benchmark was designed such that the optimal value of~$0.0$ is at the origin~$\bm{x}=(0,\ldots,0)$.
Here, we perturb the optimal assignment of combinatorial variables by flipping each binary variable with probability $\nicefrac{1}{2}$.
Figure~\ref{fig:results_ackley53} summarizes the performances of the algorithms.
\begin{figure}
    \centering
    \includegraphics[width=.7\linewidth]{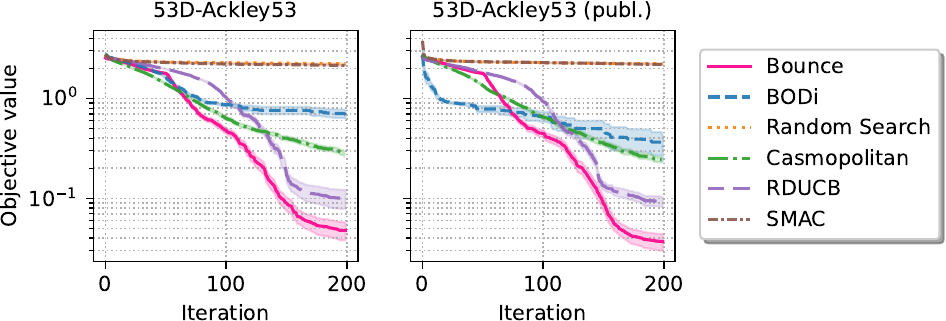}    \caption{\bounce the other algorithms on the synthetic \ackley benchmark function. \bounce outperforms all other algorithms and quickly finds excellent solutions. \bodi's performance degrades upon randomization.}
    \label{fig:results_ackley53}
\end{figure}
\bounce outperforms all other algorithms and proves to be robust to the location of the optimum point.
\casmo is a distanced runner-up.
\bodi initially outperforms \casmo on the published benchmark version but falls behind later.

\subsubsection{Contamination control}
\begin{figure}
    \centering
    \includegraphics[width=.7\linewidth]{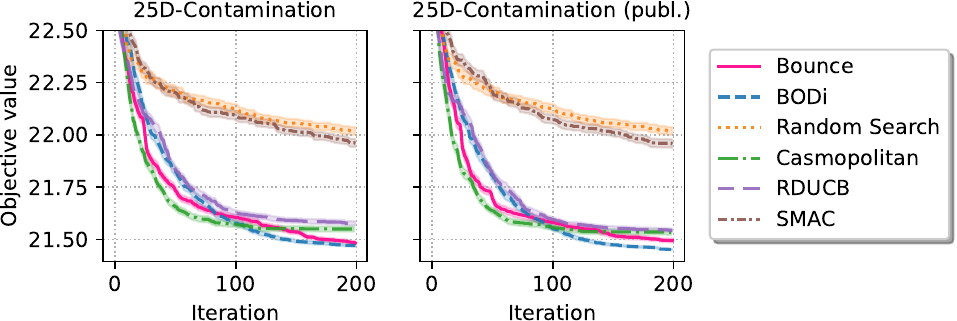}
    \caption{\bounce and the other algorithms on the 25-dimensional contamination problem. \bounce performs on par with \casmo and \bodi on both versions of the benchmark.}
    \label{fig:results_contamination}
\end{figure}
The \contamination benchmark models a supply chain with 25 stages~\cite{hu2010contamination}.
At each stage, a binary decision is made whether to quarantine food that has not yet been contaminated.
Each such intervention is costly, and the goal is to minimize the number of contaminated products and prevention cost~\cite{baptista2018bayesian,oh2019combinatorial}.
Figure~\ref{fig:results_contamination} shows the performances of the algorithms.

\bounce, \casmo, and \bodi all produce solutions of comparable objective value.
\bounce and \casmo find better solutions than \bodi initially, but after about~$100$ function evaluations, the solutions obtained by the three algorithms are typically on par.

\subsubsection{The \satsixty benchmark}\label{subsec:maximum-satisfiability}
\begin{figure}
    \centering
    \includegraphics[width=.7\linewidth]{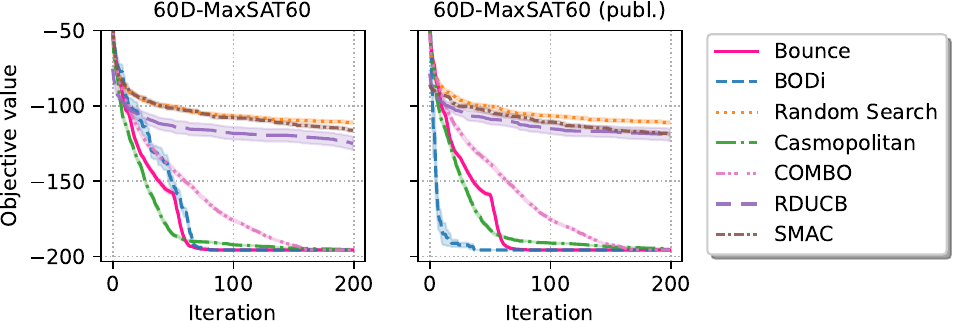}
    \caption{\bounce and other algorithms on the 60-dimensional weighted maximum satisfiability problem. \bounce is the first to find an optimal solution (left). On the published version (right), \bounce comes in second after \bodi.}
    \label{fig:results_maxsat60}
\end{figure}
\satsixty is a 60-dimensional, weighted instance of the Maximum Satisfiability (MaxSAT) problem.
MaxSAT is a notoriously hard combinatorial problem that cannot be solved in polynomial time (unless $\mathbb{P} = \mathbb{NP}$).
The goal is to find a binary assignment to the variables that satisfies clauses of maximum total weight.
For every $i$ in $\{1,2,\ldots{},d\}$, this benchmark has one clause of the form $x_i$ with a weight of 1 and 638 clauses of the form $\neg x_i \vee \neg x_j$ with a weight of 61.
Following~\cite{oh2019combinatorial,deshwal2023bayesian,wan2021think}, we normalize these weights to have zero mean and unit standard deviation.
This normalization causes the one-variable clauses to have a \emph{negative weight}, i.e., the function value improves if such a clause is not satisfied, which is atypical behavior for a MaxSAT problem.
Since the clauses with two variables are satisfied for $x_i=x_j=0$ and the clauses with one variable of negative weights are never satisfied for $x_i=0$, the normalized benchmark version has a global optimum at $\bm{x}^*=(0,\ldots,0)$ by construction.
The problem's difficulty is finding an assignment for variables such that \emph{all} two-variable clauses are satisfied and \emph{as many} one-variable clauses as possible are not captured by normalized weights.

Figure~\ref{fig:results_maxsat60} summarizes the performances of the algorithms.
The general version that attains the global optimum for a randomly selected binary assignment is shown on the left. The special case where the global optimum is set to the all-zero assignment is shown on the right.

We observe that \bounce requires the smallest number of samples to find an optimal assignment in general, followed by \bodi and \casmo.
Only in the special case where the optimum is the all-zero assignment, \bodi ranks first, confirming the corresponding result in~\citet{deshwal2023bayesian}.

\section{An evaluation of \bounce on continuous problems}\label{subsec:batches-continuous}

\begin{figure}[H]
    \centering
    \includegraphics[width=.8\linewidth]{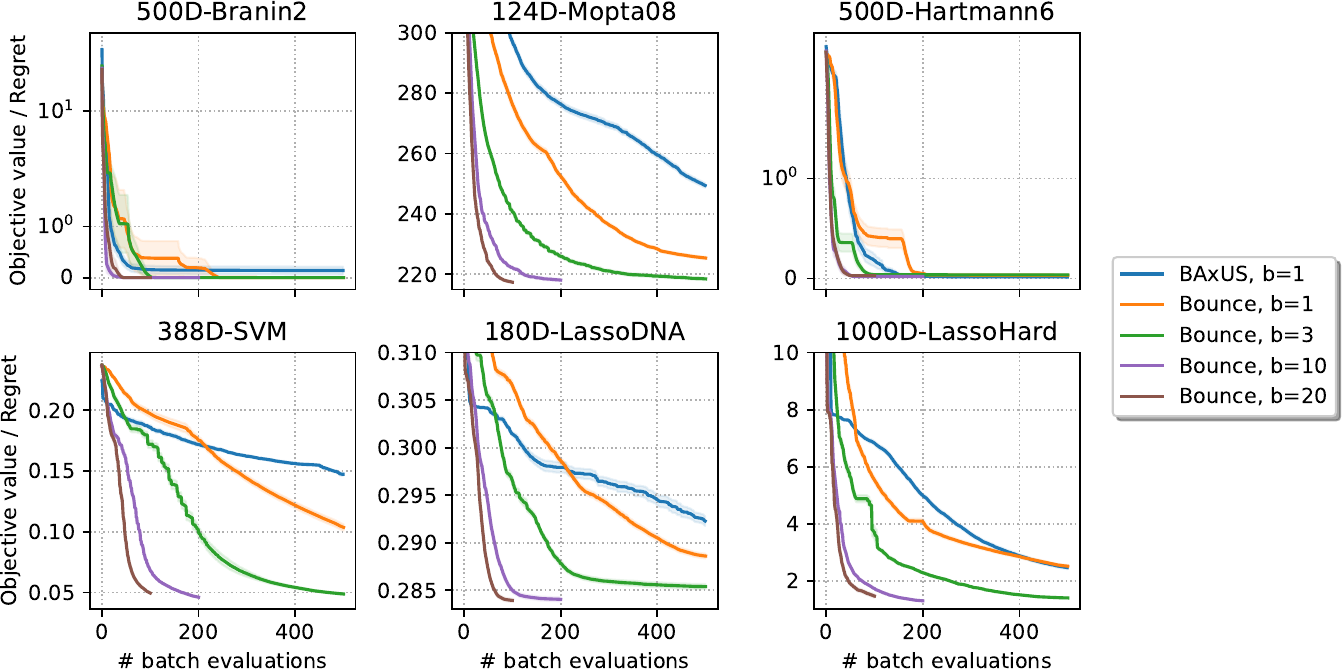}
    \caption{\bounce on continuous problems with different batch sizes (plotted in terms of batch evaluations).}
    \label{fig:batches_continuous}
\end{figure}

\begin{figure}[H]
    \centering
    \includegraphics[width=.8\linewidth]{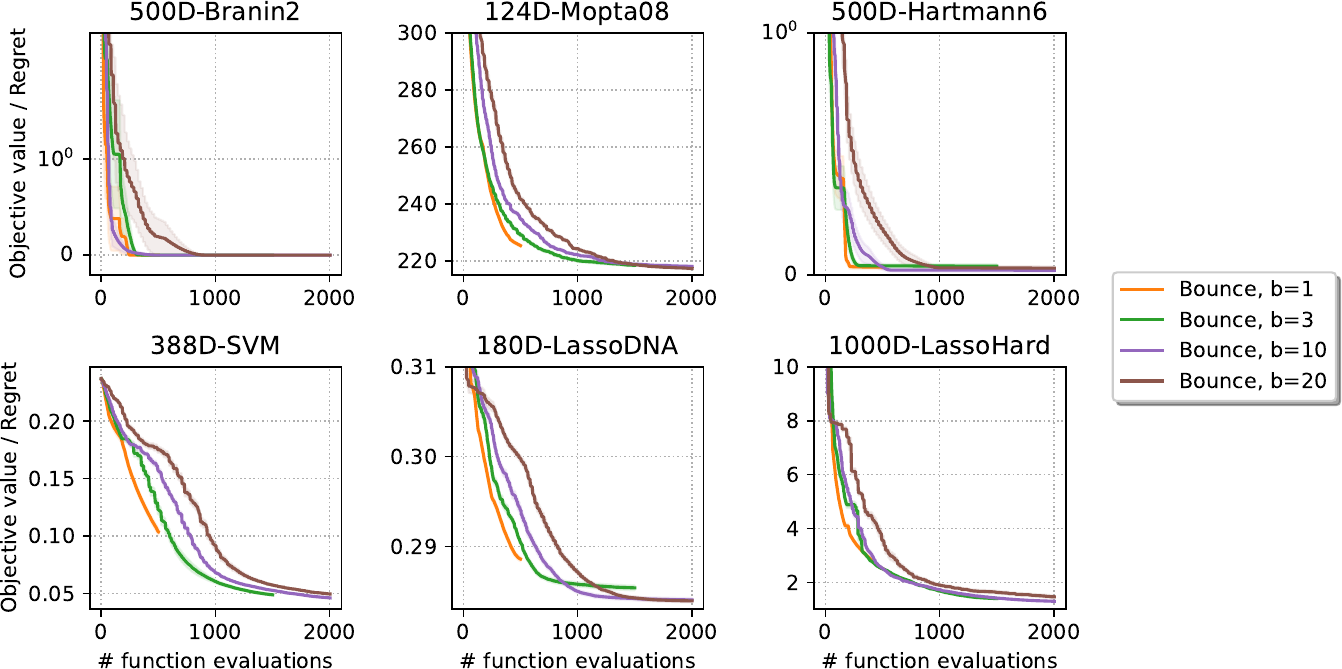}
    \caption{\bounce on continuous problems with different batch sizes (plotted in terms of function evaluations).}
    \label{fig:batches_continuous_uncondensed}
\end{figure}

To showcase the performance and scalability of \bounce, we run it on a set of continuous problems from~\citet{papenmeier2022increasing}.
The 124-dimensional \texttt{Mopta08} benchmark is a constrained vehicle optimization problem.
We adopt the soft-constrained version from~\citet{saasbo}.
The 388-dimensional \svm problem~\cite{saasbo} concerns the classification performance with an SVR on the slice localization dataset.
The 180-dimensional \texttt{LassoDNA} benchmark~\cite{vsehic2021lassobench} is a sparse regression problem on a real-world dataset, and the 1000-dimensional \texttt{LassoHard} benchmark optimizes over a synthetic dataset.
The 500-dimensional \texttt{Branin2} and \texttt{Hartmann6} problems are versions of the 2- and 6-dimensional benchmark problems where additional dimensions with no effect on the function value were added.

We set the number of function evaluations to $\max(2000,500B)$ for a batch size of $B$ and configure \bounce such that it reaches the input dimensionality after $250$ function evaluations for $B \neq 20$ and $500$ otherwise.
Figures~\ref{fig:batches_continuous} and~\ref{fig:batches_continuous_uncondensed} show the simple regret for the synthetic \texttt{Branin2} and \texttt{Hartmann6} problems, and the best function value obtained after a given number of batch (Figure~\ref{fig:batches_continuous}) or function (Figure~\ref{fig:batches_continuous_uncondensed}) evaluations for the remaining problems: \texttt{Mopta08}, \svm, \texttt{LassoDNA}, and \texttt{LassoHard}.

We observe that \bounce always benefits from more parallel function evaluations.
The difference between smaller batch sizes, such as $B=1$ and $B=3$ or $B=3$ and $B=10$, is more remarkable than between larger batch sizes, like $B=10$ and $B=20$.
Parallel function evaluations prove especially effective on \svm and \texttt{LassoDNA}.
Here, the optimization performance improves drastically.
We conclude that a small number of parallel function evaluations already helps to considerably increase the optimization performance.

On the synthetic \texttt{Branin2} and \texttt{Hartmann6} problems, \bounce quickly converges to the global optimum.
Here, we see that a larger number of parallel function evaluations also helps in converging to a better solution.

In Figure~\ref{fig:batches_continuous}, we also compare to \baxus by \citet{papenmeier2022increasing}, which does not support parallel function evaluations and is therefore run with a batch size of 1.
We observe that \bounce with a batch size of 1 outperforms \baxus on all benchmarks except for \texttt{Hartmann6}, showcasing \bounce's overall performance improvements even for continuous problems and single-element batches.

\section{Batched evaluations on mixed and combinatorial problems}\label{subsec:batches-mixed-extra-plots}

In addition to Figure~\ref{fig:batches}, which shows the best objective value in relation to the number of batches, Figure~\ref{fig:batches_uncondensed_mixed} shows the objective value in terms of function evaluations.

\begin{figure}[H]
    \centering
    \includegraphics[width=.8\linewidth]{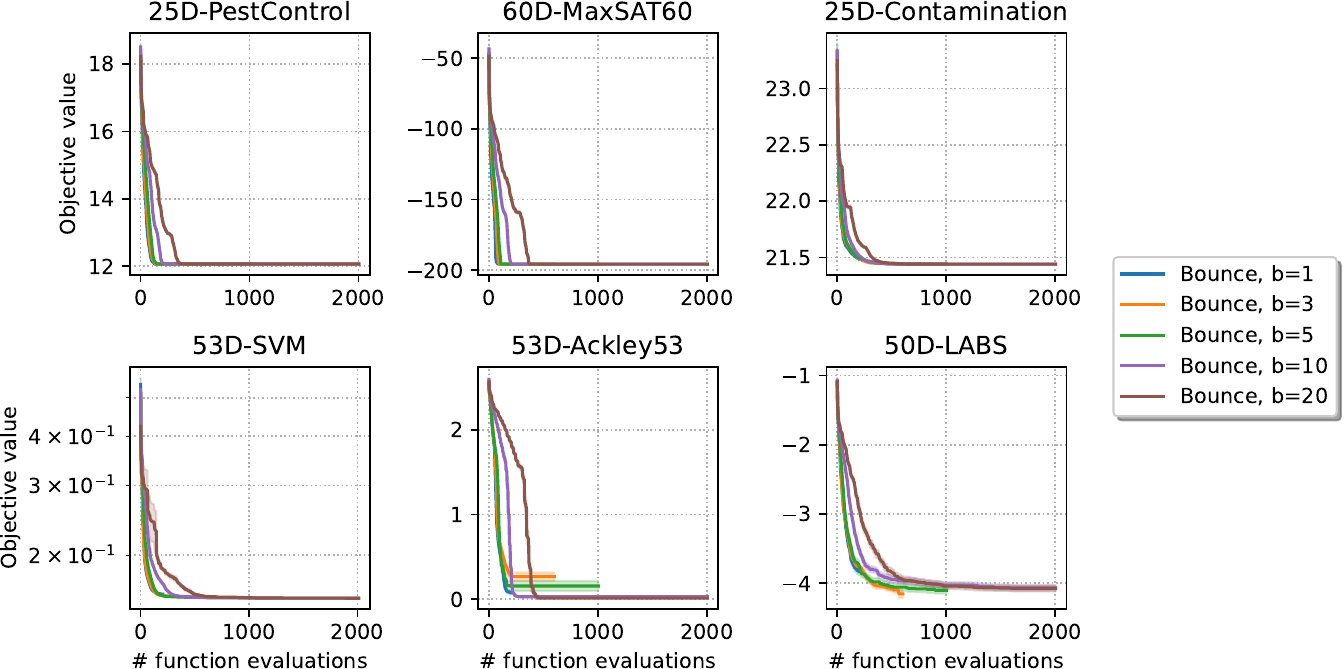}
    \caption{\bounce on mixed and combinatorial problems with different batch sizes (plotted in terms of function evaluations).}
    \label{fig:batches_uncondensed_mixed}
\end{figure}

The figure confirms that \bounce leverages parallel function values efficiently.

\section{Low-sequency version of \bounce}\label{subsec:low-sequency-version-of-baxuspp}

We show how we can bias \bounce towards low-sequency solutions.
A binary vector has low sequency if there are few changes from 0 to 1 or vice versa.
Similarly, a solution to a categorical or ordinal problem has low sequency if there are few changes from one category or level to another.
We remove the random signs (for binary and continuous variables) and the random offsets (for categorical and ordinal variables) from the \bounce embedding.
We conduct this study to show a) that \bounce can outperform \bodi on the unmodified versions of the benchmark problems if we introduce a similar bias towards low-sequency solutions and b) that the random signs empirically show to remove biases towards low-sequency solutions.
However, we want to emphasize that the results of this section are not representative of the performance of \bounce on arbitrary real-world problems.
Nevertheless, if one knows that the problem has a low-sequency structure, then \bounce can be configured to exploit this structure and outperform \bodi.

\begin{figure}
    \centering
    \includegraphics[width=.8\linewidth]{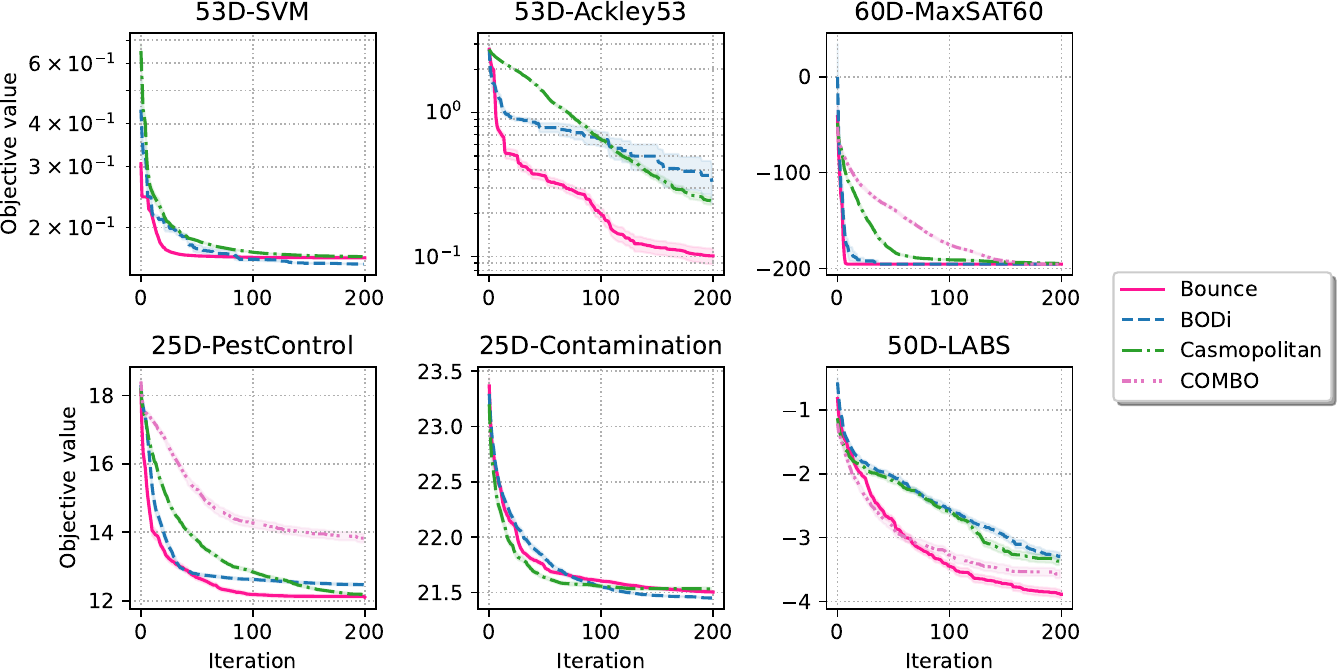}
    \caption{`Low-sequency' version of \bounce on the \textbf{original} benchmarks from Section~\ref{sec:experiments}: with a bias towards low-sequency solutions, \bounce outperforms \bodi on the original versions of the benchmark problems.}
    \label{fig:baxuspp-low-sequency}
\end{figure}

Figure~\ref{fig:baxuspp-low-sequency} shows the results of the low-sequency version of \bounce on the original benchmarks from Section~\ref{sec:experiments}.
We observe that \bounce outperforms \bodi and the other algorithms on the unmodified versions of the benchmark problems.
This shows that \bounce can outperform \bodi on the unmodified version of the benchmarks if we introduce a similar bias towards low-sequency solutions.

\begin{figure}
    \centering
    \includegraphics[width=.8\linewidth]{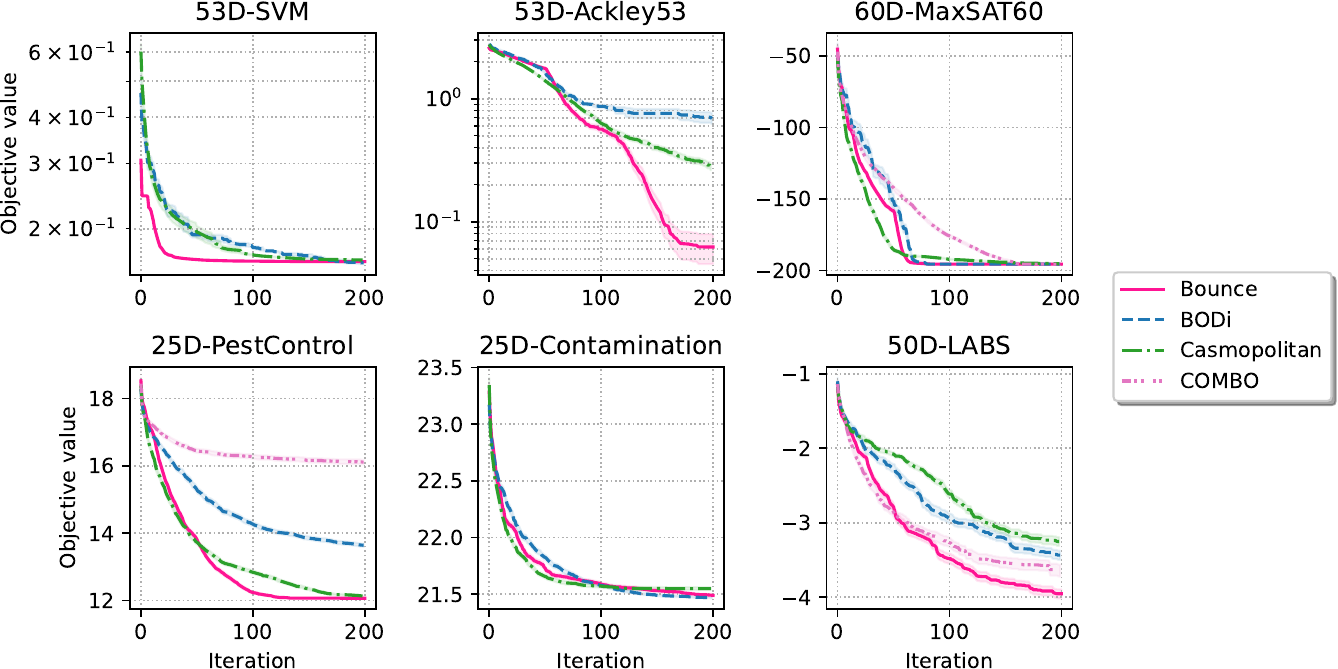}
    \caption{``Low-sequency'' version of \bounce on the \textbf{modified} benchmarks from Section~\ref{sec:experiments}.}
    \label{fig:baxuspp-low-sequency-flipped}
\end{figure}

Figure~\ref{fig:baxuspp-low-sequency-flipped} shows the results of the low-sequency version of \bounce on the flipped benchmarks from Section~\ref{sec:experiments}.
The low-sequency version of \bounce is robust towards the randomization of the optimal point.

\section{Effect of trust-region management}\label{subsec:effect-of-trust-region-management}

\bounce uses a novel trust-region management that differs from previous approaches in that it allows arbitrary trust region base lengths in $[L_{\min},L_{\max}]$.
This strategy allows \bounce to efficiently leverage parallel function evaluations, which we refer to as batch acquisition.

We compare \bounce's trust-region management strategy with the strategy employed by \baxus~\cite{papenmeier2022increasing} for purely continuous benchmarks and an adapted version of \casmo's~\cite{wan2021think} strategy for mixed or discrete-space benchmarks.
For the latter, the difference is that in \bounce we reduce the failure tolerance as described by \citet{papenmeier2022increasing}: We first calculate the number of times the \ac{TR} base length needs to be reduced to reach the minimum \ac{TR} base length as $k=\left \lfloor \log_{1.5^{-1}} \frac{L_{\text{min}}}{L_{\text{init}}} \right \rfloor$, and then find the failure tolerance for the $i$-th target space by $\tau_{\text{fail}}^i=\max \left (1, \min\left ( \left \lfloor \frac{m_i^s}{k} \right \rfloor \right ) \right )$, where $m_i^s$ is the budget for the $i$-th target space~\cite{papenmeier2022increasing}.
This change in regard to \casmo is necessary because \casmo's failure tolerance of~$40$~\cite{wan2021think} would cause the algorithm to spend a large part of the evaluation budget in initial target spaces of low dimensionality.

We show the effect of this strategy on discrete and mixed-space benchmarks in Figure~\ref{fig:trust-region-management}.
Figure~\ref{fig:trust-region-management-cont} shows the effect on continuous benchmarks.
All experiments are replicated 50 times.
We observe that the proposed trust-region management method not only enables efficient batch parallelism but also improves the performance for single-function evaluations when compared to the respective baseline for the TR management that we stated above.
\begin{figure}
    \centering
    \includegraphics[width=.8\linewidth]{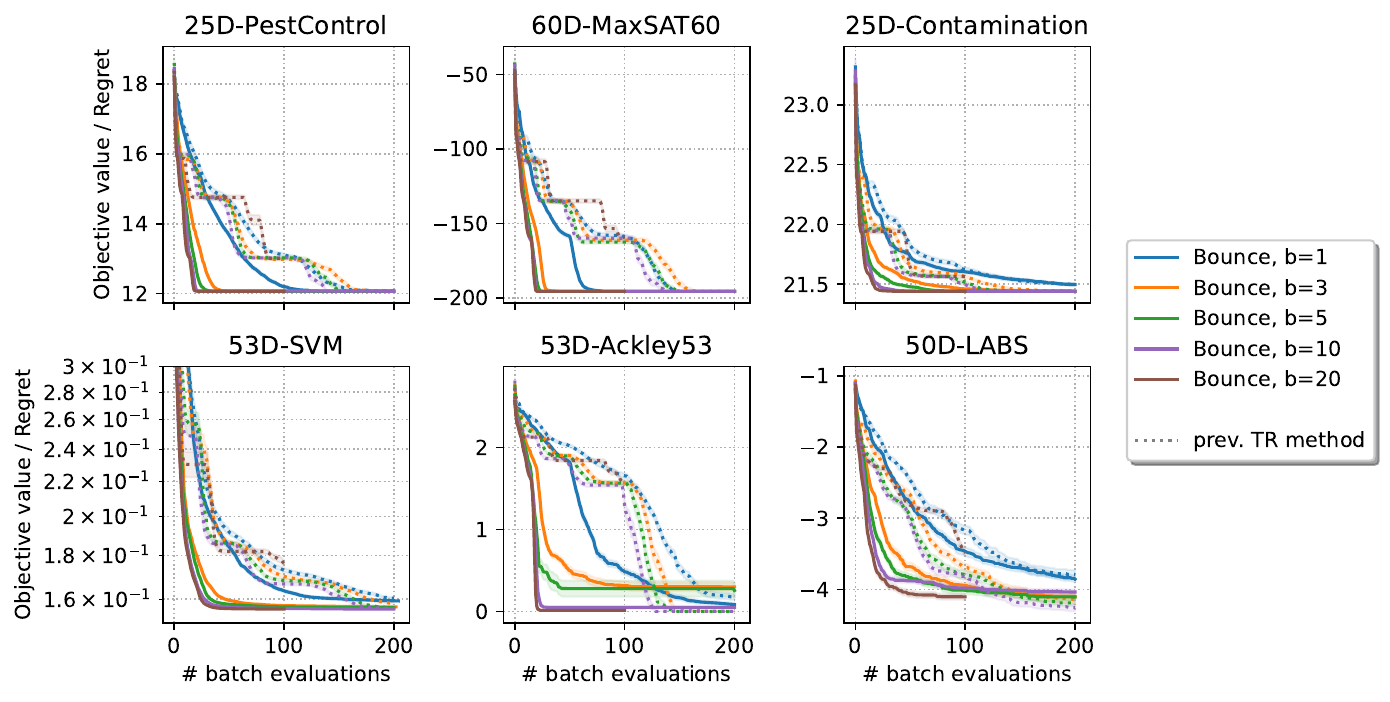}
    \caption{Effect of the proposed trust region management on the discrete and mixed benchmarks. Here, the baseline is the \ac{TR} management of~\casmo. We see that often, even large batch sizes for the old \ac{TR} management strategy do not outperform the proposed new strategy with a batch size of 1.}
    \label{fig:trust-region-management}
\end{figure}
\begin{figure}
    \centering
    \includegraphics[width=.8\linewidth]{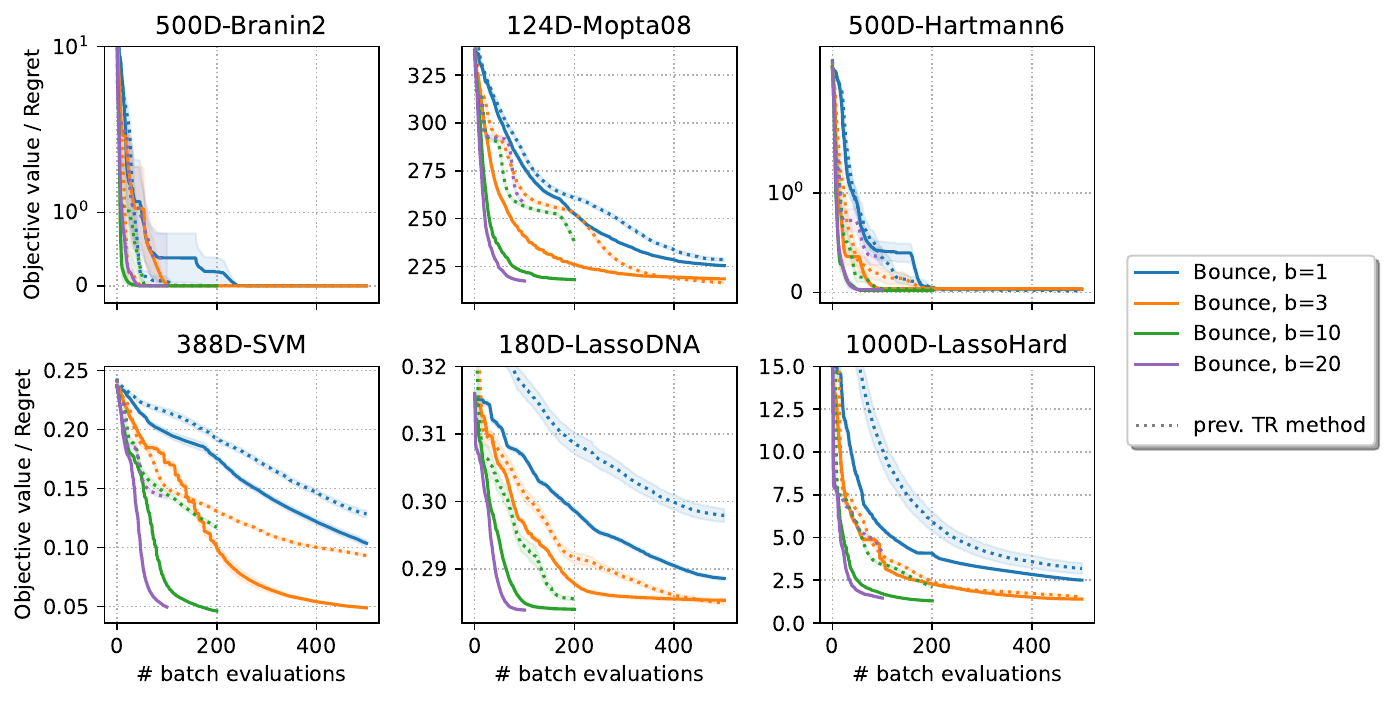}
    \caption{Effect of the proposed trust-region management on the continuous benchmarks. Here, the baseline is the \ac{TR} management of~\baxus. We observe that larger batches help to improve performance for both methods. The new strategy, however, outperforms the baseline in almost all cases with the exception of \texttt{LassoDNA} with a batch size of 3.}
    \label{fig:trust-region-management-cont}
\end{figure}
The \ac{TR} management proposed here usually provides better solutions at the same number of batches than the respective baseline TR management.
There are a few exceptions. On \texttt{Labs}, the previous \ac{TR} management strategy outperforms \bounce's strategy by a small margin for batch sizes 5, 10, and 20, and on \texttt{Ackley53}, the previous strategy converges to a better solution for a batch size of three, five, and ten.

On the continuous benchmarks (Figure~\ref{fig:trust-region-management-cont}), we observe that the new \ac{TR} management strategy also improves the performance for single function evaluations with the exception of \texttt{Branin2}.
Similar to the mixed and combinatorial-space benchmarks, large batch sizes (10 and 20) bring little advantage compared to a batch size of 5.
For \texttt{Hartmann6}, the previous strategy with a batch size of 20 performs worse than 3.

    \section{Implementation details}\label{sec:implementation-details}
    We implement \bounce in Python using the \texttt{BoTorch}~\cite{balandat2020botorch} and \texttt{GPyTorch}~\cite{gardner2018gpytorch} libraries.

We employ a $\Gamma(1.5, 0.1)$ prior on the lengthscales of both kernels and a $\Gamma(1.5, 0.5)$ prior on the signal variance.
We further use a $\Gamma(1.1, 0.1)$ prior on the noise variance.

Motivated by~\citet{wan2021think} and \citet{eriksson2019scalable}, we use an initial trust region baselength of 40 for the combinatorial variables, and 0.8 for the continuous variables.
We maintain two separate \ac{TR} shrinkage and expansion parameters ($\gamma_\textrm{cmb}$ and $\gamma_\textrm{cnt}$) for the combinatorial and continuous variables, respectively such that each \ac{TR} base length reaches its respective minimum of $1$ and $2^{-7}$ after a given number of function evaluations.
When \bounce finds a better or worse solution, we increase or decrease both \ac{TR} base lengths.

We use the author's implementations for \combo\footnote{\url{https://github.com/QUVA-Lab/combo}, unspecified license, last access: 2023-05-04}, \bodi\footnote{\url{https://github.com/aryandeshwal/bodi}, no license provided, last access: 2023-05-04}, \rducb\footnote{\url{https://github.com/huawei-noah/HEBO/tree/master/RDUCB}, \texttt{MIT} license, last access: 2023-10-20}, \smac\footnote{\url{https://github.com/automl/pysmac}, \texttt{AGPL-3.0} license, last access 2023-10-20}, and \casmo\footnote{\url{https://github.com/xingchenwan/casmo}, \texttt{MIT} license, last access: 2023-05-04}.
We use the same settings as the authors for \combo, \rducb, \smac, and \bodi.
For \casmo, we use the same settings as the authors for benchmarks reported in~\citet{wan2021think} and set the initial trust region base length to 40 otherwise.

Due to its high-memory footprint, we ran \bodi on NVidia A100 80GB GPUs for 300 GPU/h. 
We ran \bounce on NVidia A40 GPUs for 2,000 GPU/h.
We ran the remaining methods for 20,000 GPU/h on one core of Intel Xeon Gold 6130 CPUs with 60GB of memory.

\subsection{Optimization of the acquisition function}\label{subsec:optimization-of-the-acquisition-function}

We use different strategies to optimize the acquisition function depending on the type of variables present in a problem.

\paragraph{Continuous problems.}

For purely continuous problems, we follow a similar approach as \turbo~\citep{eriksson2019scalable}.
In particular, we use the lengthscales of the \ac{GP} posterior to shape the \ac{TR}.
We use gradient descent to optimize the acquisition function within the \ac{TR} bounds with 10 random restarts and 512 raw samples.
For a batch size of 1, we use analytical \ac{EI}.
For larger batch sizes, we use the \texttt{BoTorch} implementation of \ac{qEI}~\cite{wilson2017reparameterization,rezende2014stochastic,balandat2020botorch}.

\paragraph{Binary problems.}

Similar to~\citet{wan2021think}, we use discrete \acp{TR} centered on the current best solution.
A discrete \ac{TR} describes all solutions with a certain Hamming distance to the current best solution.

We use a local search approach to optimize the acquisition function for all problems with a combinatorial search space of only binary variables:
When starting the optimization, we first create a set of $\min(5000, \max(2000, 200\cdot{} d_i))$ random solutions.
The choice of the number of random solutions is based on~\turbo~\citep{eriksson2019scalable}.
For each candidate, we first draw $L_i$ indices uniformly at random from $\{1, \ldots{}, d_i\}$ without replacement, where $L_i$ is the \ac{TR} length at the $i$-th iteration.
We then sample $d_i$ values in $\{0,1\}$ and set the candidate at the sampled indices to the sampled values.
All other values are set to the values of the current best solution.
Note that this construction ensures that each candidate solution lies in the \ac{TR} bounds of the current best solution.
We add all neighbors (i.e., points with a Hamming distance of 1) of the current best solution to the set of candidates.
This is inspired by~\bodi~\citep{deshwal2023bayesian}.
We find the 20 candidates with the highest acquisition function value and use local search to optimize the acquisition function within the \ac{TR} bounds:
At each local search step, we create all direct neighbors that do not coincide with the current best solution or would violate the \ac{TR} bounds.
We then move the current best solution to the neighbor with the highest acquisition function value.
We repeat this process until the acquisition function value does not increase anymore.
Finally, we return the best solution found during local search.

\paragraph{Categorical problems.}

We adopt the approach for binary problems, i.e., we first create a set of random solutions with the same size as for purely binary problems and start the local search on the 20 best initial candidates.

Suppose the number of categorical variables of the problem is smaller or equal to the current \ac{TR} length. In that case, we sample, for each candidate and each categorical variable, an index uniformly at random from $\{1,2,\ldots{}, |v_i|\}$ where $|v_i|$ is the number of values of the $i$-th categorical variable.
We then set the candidate at the sampled index to 1 and all other values to 0.

If the number of categorical variables of the problem is larger than the current \ac{TR} length $L_i$, we first sample $L_i$ categorical variables uniformly at random from $[d_i]$ without replacement.
For each initial candidate and each sampled categorical variable, we sample an index uniformly at random, for which we set the categorical variable to 1 and all other values to 0.
The values for the variables that were not sampled are set to the values of the current best solution.

As for the binary case, we add all neighbors of the current best solution to the set of candidates, and we sample the 20 candidates with the highest acquisition function value.

We then use multi-start local search to optimize the acquisition function within the \ac{TR} bounds while neighbors are created by changing the index of one categorical variable.
Again, we repeat until convergence and return the best solution found during the local search.

\paragraph{Ordinal problems.}

The construction for ordinal problems is similar to the one for categorical problems.

Suppose the number of ordinal variables of the problem is smaller or equal to the current \ac{TR} length. In that case, we sample an ordinal value uniformly at random to set the ordinal variable for each candidate and each ordinal variable.
Otherwise, we choose as many ordinal variables as each candidate's current \ac{TR} length and sample an ordinal value uniformly at random to set the ordinal variable.
We add all neighbors of the current best solution, all solutions where the distance to the current best solution is 1 for one ordinal variable, to the set of candidates.
We then sample the 20 candidates with the highest acquisition function value and use local search to optimize the acquisition function within the \ac{TR} bounds.
In the local search, we increment or decrement the value of a single ordinal variable.

\paragraph{Mixed problems.}
Mixed problems are effectively handled by treating every variable type separately.
Again, we create a set of initial random solutions where the values for the different variable types are sampled according to the abovementioned approaches.
This can lead to solutions outside the \ac{TR} bounds.
We remove these solutions and find the 20 best candidates only across the solutions within the \ac{TR} bounds.

When optimizing the acquisition function, we differentiate between continuous and combinatorial variables.
We optimize the continuous variables by gradient descent with the same settings as purely continuous problems.
When optimizing, we fix the values for the combinatorial values.

We use local search to optimize the acquisition function for the combinatorial variables.
In this step, we fix the values for the continuous variables and only optimize the combinatorial variables.
We create the neighbors by creating neighbors within Hamming distance of 1 for each combinatorial variable type and then combining these neighbors.
Again, we run a local search until convergence.

We do five interleaved steps, starting with the continuous variables and ending with the combinatorial variables.

\subsection{Kernel choice}\label{subsec:kernel_choice}

We use the \texttt{CoCaBo} kernel~\cite{ru2020bayesian} with one global lengthscale for the combinatorial and \ac{ARD} for the continuous variables: 
\begin{align*}
    k(\bm{x},\bm{x}')&=\sigma_f^2(\rho k_\textrm{cmb}(\bm{x}_\textrm{cmb}, \bm{x}'_\textrm{cmb})k_\textrm{cnt}(\bm{x}_\textrm{cnt}, \bm{x}'_\textrm{cnt})\\
    &+ (1-\rho)(k_\textrm{cmb}(\bm{x}_\textrm{cmb}, \bm{x}'_\textrm{cmb})+k_\textrm{cnt}(\bm{x}_\textrm{cnt}, \bm{x}'_\textrm{cnt})))\\
    k_\textrm{cmb}(\bm{x}_\textrm{cmb}, \bm{x}'_\textrm{cmb}) &= \left (1+\frac{\sqrt{5}r_{\textrm{cmb}}}{\ell_{\textrm{cmb}}}+\frac{5r_{\textrm{cmb}}^2}{3\ell_{\textrm{cmb}}^2} \right )\exp\left (-\frac{\sqrt{5}r_{\textrm{cmb}}}{\ell_{\textrm{cmb}}} \right )\\
    r_{\textrm{cmb}} &= \lVert \bm{x}_\textrm{cmb}- \bm{x}'_\textrm{cmb}\rVert \\
    k_\textrm{cnt}(\bm{x}_\textrm{cnt}, \bm{x}'_\textrm{cnt}) &= \left (1+\sqrt{5}r_{\textrm{cnt}}+5r_{\textrm{cnt}}^2 \right )\exp\left (-\sqrt{5}r_{\textrm{cnt}} \right ) \\
    r_{\textrm{cnt}} &= \sqrt{\sum_{i\in [d];\, d_i \text{ continuous}} \frac{(x_i-x'_i)^2}{\ell_{\textrm{cnt},i}^2}}\\
    \intertext{where $\bm{x}_\textrm{cnt}$ and $\bm{x}_\textrm{cmb}$ are the continuous and combinatorial variables in $\bm{x}$, respectively, i.e.,}
    \bm{x}_\textrm{cmb} &= \left (x_i:\; d_i \text{ is combinatorial} \right )\\
    \bm{x}_\textrm{cnt} &= \left (x_i:\; d_i \text{ is continuous} \right )
\end{align*}
and $\rho \in [0,1]$ is a tradeoff parameter learned jointly with the other hyperparameters during the likelihood maximization.
Here, $\ell_{\textrm{cmb}}$ and $\ell_{\textrm{cnt},i}$ are the lengthscale hyperparameters.

    \section{Additional analysis of \bodi and \combo}

    \subsection{Analysis of \bodi}

    \paragraph{Binary problems.}\label{sec:additional-analysis-for-bodi-on-binary-problems}
    \bodi by~\citet{deshwal2023bayesian} uses a dictionary of anchor points $\bm{A}=(\bm{a}_1, \ldots, \bm{a}_m)$ to encode a candidate point $\bm{z}$.
In particular, the $i$-th entry of the $m$-dimensional embedding $\phi_{\bm{A}}(\bm{z})$ is obtained by computing the Hamming-distance between $\bm{z}$ and $\bm{a}_i$.
Notably, \citet{deshwal2023bayesian} choose the dimensionality of the embedding $m$ as $128$, which is larger than the dimensionality of the objective functions themselves.

The sampling procedure for dictionary elements $\bm{a}_i$ is chosen to yield a wide range of \emph{sequencies}.
The sequency of a binary string is defined as the number of times the string changes from $0$ to $1$ and vice versa.
\citet{deshwal2023bayesian} propose two approaches to generate the dictionary elements: (i) using binary wavelets, and (ii) by first drawing a Bernoulli parameter $\theta_i\sim \mathcal{U}(0,1)$ for each $i\in [m]$ and then drawing a binary string $\bm{a}_i$ from the distribution $\mathcal{B}(\theta_i)$.
The latter approach is their default method.

We prove that \bodi generates an all-zeros (or, by a similar symmetry argument, all-ones) representer point with a probability that is significantly higher than $2^{-D}$. 
Moreover, we claim without proof that a similarly increased probability holds for points with low Hamming distance to all-zeros or all-ones.
This is consistent with the intention of \bodi to sample points of diverse sequency and not necessarily an issue.

However, we claim without proof that having such points in the dictionary substantially increases the probability that \bodi evaluates the all-zeros (and, by symmetry, the all-ones) points. 
That hypothesis is consistent with our observation in Section~\ref{subsec:performance-analysis-of-bodi} that \bodi has a much higher chance of finding good or optimal solutions when they are near the all-zero point.

\citet{deshwal2023bayesian} choose the dictionary to have $m=128$ dictionary elements.
Given a Bernoulli parameter $\theta_i$, the probability that the $i$-th dictionary point $\bm{a}_i$ is a point of sequency zero is given by $\theta_i^D+(1-\theta_i)^D$:
\begin{align}
    \mathbb{P}(\text{``zero sequency''}\;|\;\theta_i) &= \prod_{i=1}^m \theta_i^D+(1-\theta_i)^D
\intertext{Then, since $\theta_i$ follows a uniform distribution, the overall probability for a point of zero sequency is given by}
    \mathbb{P}(\text{``zero sequency''})&=1-\underbrace{\prod_{i=1}^m \int_0^1 \left(1-\theta_i^D - (1-\theta_i)^D\right) \underbrace{p(\theta_i)}_{=1} d\theta_i}_{\text{prob. of $m$ times not zero sequency}}\\
    &= 1-\prod_{i=1}^m \left.\left(\theta_i - \frac{\theta_i^{D+1}}{D+1} + \frac{(1-\theta_i)^{D+1}}{D+1}\right)\right|_{\theta_i=1}\\
    &= 1-\left(1-\frac{2}{D+1} \right)^m,
\end{align}
i.e., the probability of at least one dictionary element being of sequency zero is $1-\left(1-\frac{2}{D+1} \right)^m$.
The probability of \bodi's dictionary to contain a zero-sequency point increases with the number of dictionary elements $m$ and decreases with the function dimensionality $D$ (see Figure~\ref{fig:bodi_zs_probs}).
\begin{figure}
    \centering
    \includegraphics[width=.5\linewidth]{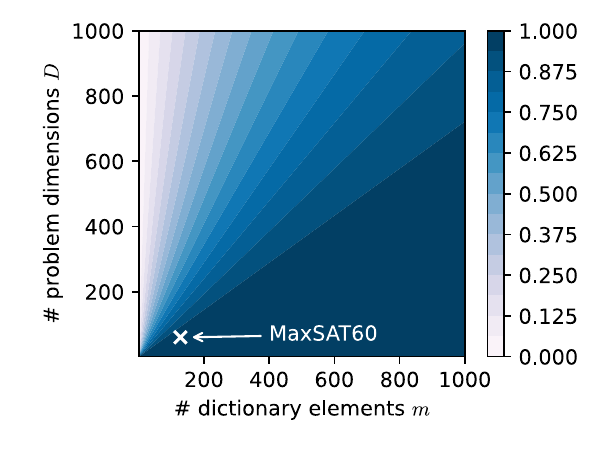}
    \caption{Probabilities of \bodi to contain a zero-sequency solution for different choices of the dictionary size $m$ and the function dimensionality $D$.}
    \label{fig:bodi_zs_probs}
\end{figure}

For instance, for the 60-dimensional \satsixty benchmark, the probability that at least one dictionary element is of sequency zero is $1-\left(1-\frac{2}{60+1} \right)^{128} \approx 0.986$ (see Figure~\ref{fig:bodi_zs_probs}).

Note that at least one point $\bm{z}^*$ has a probability of $\leq 1/2^d$ to be drawn.
The probability of the dictionary containing that $\bm{z}^*$ is less than or equal to $1-\left(1-1/2^d\right)^m$ which is already less than 0.01 for $d=14$ and $m=128$.
In Section~\ref{sec:experiments}, we show that randomizing the optimal point structure leads to performance degradation for \bodi.
We hypothesize this is due to the reduced probability of the dictionary containing the optimal point after randomization.

    \paragraph{Categorical problems.}\label{sec:analysis-of-bodi-for-categorical-problems}
    We calculate the probability that \bodi contains a vector in its dictionary where all elements are the same.
For categorical problems, \bodi first samples a vector $\bm{\theta}$ from the $\tau_{\rm max}$-simplex $\Delta^{\tau_{\rm max}}$ for each vector $\bm{a}_i$ in the dictionary, with $\tau_{\rm max}$ being the maximum number of categories across all categorical variables of a problem.
We assume that all variables have the same number of categories as is the case for the benchmarks in~\citet{deshwal2023bayesian}.
Let $\tau$ be the number of categories of the variables.
For each element in $\bm{a}_i$, \bodi draws a value from the categorical distribution with probabilities $\bm{\theta}$.
While line 7 in Algorithm~5 in \citet{deshwal2023bayesian} might suggest that the elements in $\bm{\theta}$ are shuffled for every element in $\bm{a}_i$, we observe that $\bm{\theta}$ remains fixed based on the implementation provided by the authors\footnote{See \url{https://github.com/aryandeshwal/bodi/blob/aa507d34a96407b647bf808375b5e162ddf10664/bodi/categorical_dictionary_kernel.py\#L18}}.
The random resampling of elements from $\bm{\theta}$ is probably only used for benchmarks where the number of realizations differs between categorical variables.

Then, for a fixed $\bm{\theta}$, the probability that all $D$ elements in $\bm{a}_i$ for any $i$ are equal to some fixed value $t \in \{1,2,\ldots,\tau\}$ is given by $\theta_{t}^d$.
The probability that, for any of the $m$ dictionary elements, all $D$ elements in $\bm{a}_i$ are equal to some fixed value $t \in \{1,2,\ldots,\tau\}$ is given by
\begin{align}
    \mathbb{P}(\text{``all one specific category''})= 1-\prod_{i=1}^m \int (1-\theta_{t}^D)p(\theta_t) d\theta_t. \label{eq:all_one_specific_category}
\end{align}
We note that $\bm{\theta}$ follows a Dirichlet distribution with $\bm{\alpha}=\bm{1}$~\cite{ng2011dirichlet}.
Then, $\theta_t$ is marginally Beta$(1,\tau-1)$-distributed~\cite{ng2011dirichlet}.
With that, Eq.~\eqref{eq:all_one_specific_category} becomes
\begin{align}
    \mathbb{P}(\text{``all one specific category''}) &= 1-\prod_{i=1}^m \mathbb{E}_{\theta_t \sim \text{Beta}(1,\tau-1)} \left[1-\theta_{t}^D\right]\\
    &= 1-\prod_{i=1}^m 1-\mathbb{E}_{\theta_t \sim \text{Beta}(1,\tau-1)} \left[\theta_{t}^D\right]
    \intertext{now, by using the equality $\mathbb{E}[x^D] = \prod_{r=0}^{D-1}\frac{\alpha+r}{\alpha+\beta+r}$ for the $D$-th raw moment of a Beta$(\alpha,\beta)$ distribution~\cite{ng2011dirichlet}}
    &= 1-\left (1-\prod_{r=0}^{D-1}\frac{1+r}{\tau+r} \right )^m\\
    &= 1-\left (1-\frac{1}{\tau}\cdot{}\frac{2}{\tau+1}\cdot{}\ldots{}\cdot{}\frac{D}{\tau+D-1} \right )^m\\
    &= 1 - \left ( 1- \frac{D!\;\tau!}{(\tau+D-1)!} \right )^m
\end{align}

\begin{figure}
    \centering
    \includegraphics[width=\linewidth]{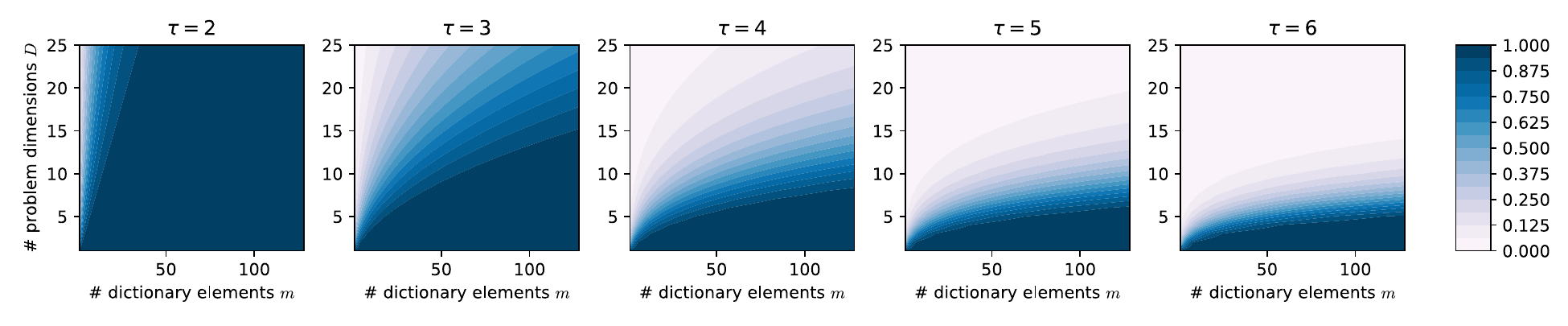}
    \caption{Probabilities of \bodi's dictionary to contain at least one categorical point where each category has the same value. The probability increases with the number of dictionary elements $m$ but decreases with the number of categories $\tau$ and the number of problem dimensions $D$.}
    \label{fig:bodi_zs_probs_cat}
\end{figure}

We discussed in Section~\ref{subsec:pest-results} that the \pest benchmark obtains a good solution at $\bm{x}=\bm{5}$.
One could assume that \bodi performs well on this benchmark because its dictionary will likely contain this point.
However, we observe that the probability is effectively zero for $\tau=5$, $m=128$, and $D=25$ (see Figure~\ref{fig:bodi_zs_probs_cat}), which are the choices for the \pest benchmark in \citet{deshwal2023bayesian}.
This raises the question of (i) whether our hypothesis is wrong and (ii) what the reason for \bodi's performance degradation on the \pest benchmark is.

We show that \bodi's reference implementation differs from the algorithmic description in an important detail, causing \bodi to be considerably more likely to sample category five on \pest (or the ``last'' category for arbitrary benchmarks) than any other category.

\begin{figure}[H]
    \centering
    \includegraphics[width=.8\linewidth]{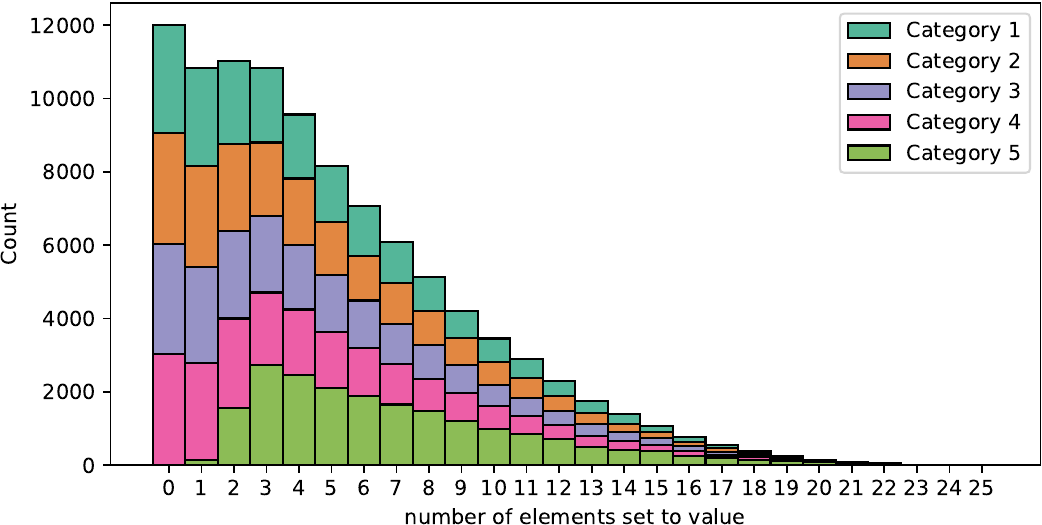}
    \caption{Histograms over the number of dictionary element entries set to each category for 20,000 repetitions of the sampling of dictionary elements for the \pest benchmark.
    For each of the five categories and each value on the $x$-axis, the figure shows how often the number of entries in a dictionary element equals the value on the $x$-axis for the given category.
    For example, the count for $x=0$ and category 5 is zero, indicating that all 20,000 dictionary points had at least one entry `$5$'.
    There is a considerably higher chance for a dictionary element entry to be set to category five than to any of the other categories.}
    \label{fig:bodi_element_sampling}
\end{figure}

Figure~\ref{fig:bodi_element_sampling} shows five histograms over the number of dictionary elements set to each category.
The values on the $x$-axis give the number of elements in a 25-dimensional categorical vector being set to a specific category.
One would expect that the histograms have a similar shape regardless of the category.
However, for category 5, we see that more elements are set to this category than for the other categories: The probability of $k$ elements being set to category 5 is almost twice as high as the probability of being set to another category for $k\geq 3$.
In contrast, the probability that no element in the vector belongs to category 5 is virtually zero.
This behavior is beneficial for the \pest benchmark, which obtained the best value found during our experiments for $\bm{x}^*=(5,5,\ldots,5,1)$ (see Section~\ref{sec:experiments}).
While we see that the probability of each dictionary entry being set to category 5 is very low, we assume that we sample sufficiently many dictionary elements within a small Hamming distance to the optimizer such that \bodi's \ac{GP} can use this information to find the optimizer.

The reason for oversampling of the last category lies in a rounding issue in sampling dictionary elements.
In particular, for a given dictionary element $\bm{a}_i$ and a corresponding vector $\bm{\theta}$ with $|\bm{\theta}|=\tau$, for each $i\in \{1,2,\ldots,\tau-1\}$, \citet{deshwal2023bayesian} set $\lfloor D\bm{\theta}_i\rfloor$ elements to category $i$.
The remaining $D-\sum_{i=1}^{\tau-1} \lfloor D\bm{\theta}_i\rfloor$ elements are then set to category $\tau$.
This causes the last category to be overrepresented in the dictionary elements.
For the choices of the \pest benchmark, $D=25$ and $\tau=5$, the first four categories had a probability of $\approx 0.1805$. In contrast, the last one had a probability of $\approx 0.278$ for $10^8$ simulations\footnote{The 95\% confidence intervals for categories 1--5 are (0.1799, 0.1807), (0.1802, 0.1810), (0.1803, 0.1811), (0.1801, 0.1809), (0.2775, 0.2783). Pairwise Wilcoxon signed-rank tests between categories 1--4 and category 5 gives $p$ values of 0 ($W\approx 4.7\cdot 10^{10}$ each).}.
We assume that the higher probability of the last category is the reason for the performance difference between the modified and the unmodified version of the \pest benchmark.

    \subsection{\combo on categorical problems}\label{subsec:combo-on-categorical-problems}
    On the categorical \pest benchmark, \combo~\cite{oh2019combinatorial} behaved similar to \bodi~\cite{deshwal2023bayesian}

\begin{figure}[H]
    \centering
    \includegraphics[width=.8\linewidth]{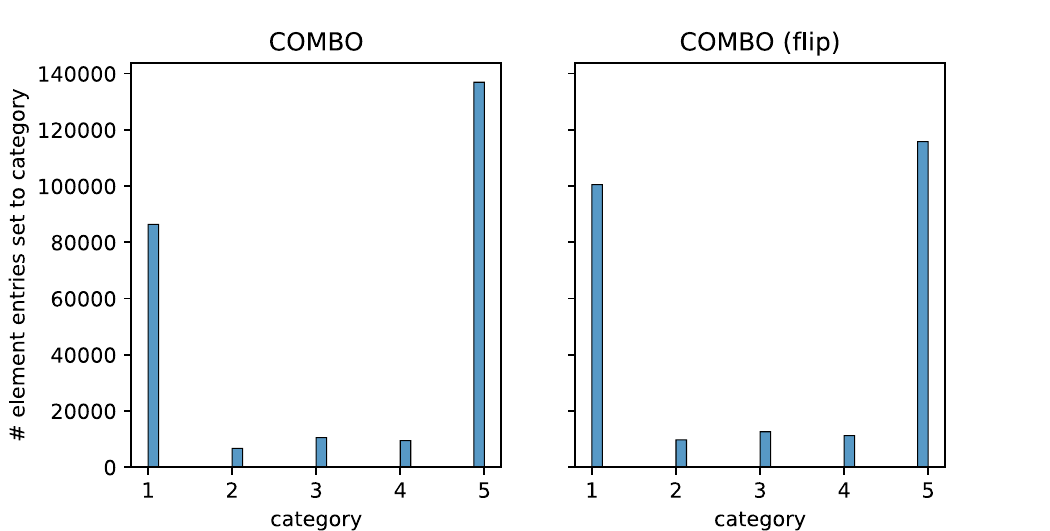}
    \caption{
    The histogram shows how many element entries of a 25-dimensional dictionary element are set to each of the five categories.
    There is a considerably higher chance for a dictionary element entry to be set to category 1 or 5 than to one of the other categories.}
    \label{fig:combo}
\end{figure}

The histogram in Figure~\ref{fig:combo} shows how many element entries of a 25-dimensional dictionary element are set to each of the five categories.
We see that the first and the last categories are over-represented on both benchmark versions.
As discussed in Section~\ref{sec:experiments}, this benchmark attains the best observed value for $\bm{x}^*=(5,5,\ldots,5,1)$.
We observe that on the original and modified version of the benchmark, where the categories are shuffled, \combo sets disproportionately many elements to categories one and five. 
Note that for the modified version of the benchmark, all categories are equally likely in the optimal solution.

We argue that this behavior is at least partially caused by implementation error in the construction of the adjacency matrix and the Laplacian for categorical problems\footnote{\url{https://github.com/QUVA-Lab/COMBO/blob/9529eabb86365ce3a2ca44fff08291a09a853ca2/COMBO/experiments/test_functions/multiple_categorical.py\#L137}, last access: 2023-04-26}.
This error causes categorical variables to be modeled like ordinal variables.
According to~\citet{oh2019combinatorial}, categorical variables are modeled as a complete graph (see Figure~\ref{fig:complete_graph}).
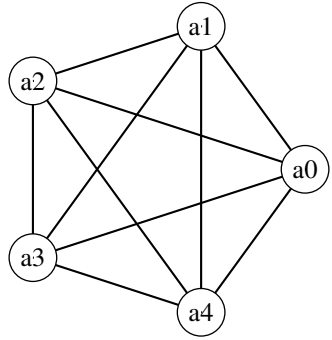
\begin{figure}[H]
    \centering
    \begin{tikzpicture}[scale=.5]
        \grComplete{5}
    \end{tikzpicture}
    \caption{Following the description in the paper of \citet{oh2019combinatorial}, a categorical variable with five categories is modeled as a complete graph.}
    \label{fig:complete_graph}
\end{figure}

Looking into the source code of \combo, we find the adjacency matrix for the first category of a categorical variable with five categories is constructed as
\begin{equation*}
    \begin{pmatrix}
        0 & 1 & 0 & 0 & 0 \\
        1 & 0 & 1 & 0 & 0 \\
        0 & 1 & 0 & 1 & 0 \\
        0 & 0 & 1 & 0 & 1 \\
        0 & 0 & 0 & 1 & 0
    \end{pmatrix},
\end{equation*}
which is the adjacency matrix for a five-vertex path graph.

\end{document}